\newlength{\itemwidth}
\newtheorem{proposition}{\bf Proposition}
\newtheorem{proof}{\bf Proof}
\begin{document}
%
\title{Rolling Shutter Inversion: Bring Rolling Shutter Images to High Framerate Global Shutter Video}
%
%
%
%

\author{Bin Fan,~\IEEEmembership{Student Member,~IEEE,}
        Yuchao Dai,~\IEEEmembership{Member,~IEEE,}
        and~Hongdong Li,~\IEEEmembership{Senior Member,~IEEE}
\IEEEcompsocitemizethanks{\IEEEcompsocthanksitem Bin Fan and Yuchao Dai are with School of Electronics and Information, Northwestern Polytechnical University, Xi'an, 710129, China. E-mail: binfan@mail.nwpu.edu.cn, daiyuchao@gmail.com. Yuchao Dai is the corresponding author.
\IEEEcompsocthanksitem Hongdong Li is with School of Computing, the Australian National University, Canberra, Australia. E-mail: hongdong.li@gmail.com.}

}

%
%

\markboth{Journal of \LaTeX\ Class Files,~Vol.~14, No.~8, August~2015}%
{Shell \MakeLowercase{\textit{et al.}}: Bare Demo of IEEEtran.cls for Computer Society Journals}
\IEEEtitleabstractindextext{%
\begin{abstract}

\justifying{
A single rolling-shutter (RS) image may be viewed as a row-wise combination of a sequence of global-shutter (GS) images captured by a (virtual) moving GS camera within the exposure duration. Although rolling-shutter cameras are widely used, the RS effect causes obvious image distortion especially in the presence of fast camera motion, hindering downstream computer vision tasks. In this paper, we propose to {\em invert} the rolling-shutter image capture mechanism, i.e., recovering a continuous high framerate global-shutter video from two time-consecutive RS frames. We call this task the RS temporal super-resolution (RSSR) problem. The RSSR is a very challenging task, and to our knowledge, no practical solution exists to date. This paper presents a novel deep-learning based solution. By leveraging the multi-view geometry relationship of the RS imaging process, our learning based framework successfully achieves high framerate GS generation. Specifically, three novel contributions can be identified: (i) novel formulations for bidirectional RS undistortion flows under constant velocity as well as constant acceleration motion model. (ii) a simple linear scaling operation, which bridges the RS undistortion flow and regular optical flow. (iii) a new mutual conversion scheme between varying RS undistortion flows that correspond to different scanlines. Our method also exploits the underlying spatial-temporal geometric relationships within a deep learning framework, where no additional supervision is required beyond the necessary middle-scanline GS image. Building upon these contributions, this paper represents the very first rolling-shutter temporal super-resolution deep-network that is able to recover high framerate global-shutter videos from just two RS frames. Extensive experimental results on both synthetic and real data show that our proposed method can produce high-quality GS image sequences with rich details, outperforming the state-of-the-art methods.


}

\end{abstract}

\begin{IEEEkeywords}
Rolling-shutter, image correction, temporal super-resolution, end-to-end learning, geometric vision.
\end{IEEEkeywords}}

\maketitle

\IEEEdisplaynontitleabstractindextext

%
\IEEEpeerreviewmaketitle

\IEEEraisesectionheading{\section{Introduction}\label{sec:introduction}}

\IEEEPARstart{M}{ost} consumer-grade cameras are built upon CMOS sensors using a rolling-shutter (RS) mechanism. Unlike its global-shutter (GS) counterpart, the rolling-shutter camera captures an image frame in a sequential row-by-row manner, which invariably leads to the so-called RS effect (\emph{e.g.}, stretching, wobbling) in the obtained images and videos when the camera moves relative to the scene.
This RS effect is becoming a nuisance in photography and in downstream computer vision tasks. Simply ignoring the RS effect often results in performance degradation or even failure for many real-world applications \cite{dai2016rolling,lao2020rolling,albl2019rolling,wang2021depth,im2018accurate}.

Fortunately, we made an important observation in this paper. Namely, we recognize that a single RS image actually captures richer temporal dynamics information about the underlying relative motion between the scene and the camera. This dynamic information, hidden in the RS scanlines, has not been properly utilized by almost all previous works on RS cameras. By exploiting this hidden-and-previously overlooked information, one could possibly remove the RS artifact, and this is the key insight of this paper.


Recovering the above dynamic information from a single RS image, while doable in theory, is extremely challenging in practice due to the severe ill-posedness of the task (\emph{e.g.}, \cite{rengarajan2017unrolling,zhuang2019learning,rengarajan2016bows,lao2018robust}). On the other hand, using multiple RS frames (\emph{e.g.}, \cite{vasu2018occlusion,zhong2021rscd,hedborg2012rolling,fan2021rsstereo}) would make the task more tractable, but at the cost of increased computation. In this paper, we take a middle ground, using a minimum of two consecutive rolling-shutter frames for the task, in order to strike a balance between recovery performance and efficiency (see also \cite{liu2020deep,zhuang2017rolling,zhuang2020homography,fan2021sunet}).

Intuitively, given a pair of consecutive rolling-shutter frames, humans seem to be able to infer a plausible explanation for the underlying geometry and temporal dynamics (\emph{i.e.}, camera motion and scene structure). Hence, recovering the underlying geometry from consecutive rolling-shutter images seems to be possible.
Towards this goal, this paper advocates recovering a continuous temporal sequence of latent global-shutter frames from two consecutive RS images, \emph{i.e.}, this task requires solving and subsequently inverting the underlying rolling-shutter geometry.

\begin{figure*}
	\centering
	\begin{center}
		\setlength{\itemwidth}{17.85cm}
		\includegraphics[width=\textwidth]{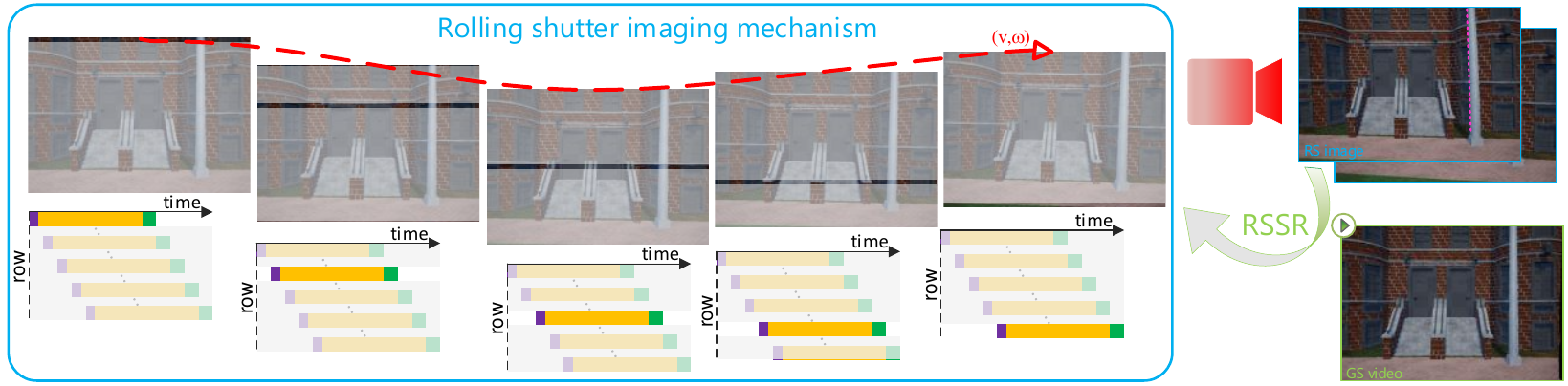}\vspace{-0.50cm}
	\end{center}
	\caption{A rolling-shutter image may be generated by continuously extracting scanlines from continuous global-shutter images row by row and one at a time. In contrast, our RSSR pipeline essentially inverts this process, \emph{i.e.}, reconstructing the latent GS image sequence, frame by frame, from two consecutive RS images. \textit{The recovered GS video (shown in the bottom right position of the figure) can be viewed in the arXiv version of \cite{fan2021rssr}.} \label{fig:Innovation_example_fig1}}
	\vspace{-3.5mm}
\end{figure*}

To the best of our knowledge, this task (which we call RSSR, stands for Rolling-Shutter temporal Super-Resolution) is a novel computer vision task, and has not been studied before. Moreover, existing RS correction methods (\emph{e.g.}, \cite{zhuang2017rolling}) are unable to solve this task, because of the need to estimate small and subtle intra-frame motions, and the need for non-trivial camera calibration and sophisticated optimization procedures. So far, we have not seen any deep-learning based solution for this task either.


In this paper, we propose a novel deep-learning based method to {\em invert} the rolling-shutter image capture mechanism, \emph{i.e.}, which recovers a continuous global-shutter video sequence from two consecutive rolling-shutter images. The concept of our method is illustrated in Fig.~\ref{fig:Innovation_example_fig1}. This inversion process resembles the aforementioned human ability of inferring temporal dynamics from consecutive RS frames.

In the literature, the following two classes of existing works are mostly related to our method: (i) GS frame interpolation (\emph{e.g.}, \cite{jiang2018super,niklaus2020softmax,bao2019depth}) and (ii) two-image RS correction (\emph{e.g.}, \cite{liu2020deep,zhuang2017rolling,zhuang2020homography}). However, we would like to point out that our new method is significantly different from both classes of methods. For example, our method achieves RS-aware pixel displacement by scaling the corresponding optical flow vector under the constant velocity motion model, whereas the GS frame interpolation methods often rely on small and controllable pixel displacement and hence fall short when dealing with the RS effect. For another example, our method is free from the unsmoothness or the ghosting artifact which often hinders the traditional RS correction methods.
Although existing deep-learning based RS correction methods have achieved impressive performance, by using them only one single GS frame can be recovered (\emph{e.g.}, DeepUnrollNet \cite{liu2020deep}, or See Subsection~\ref{Analysis_DeepUnrollNet} for more examples). In contrast, our method produces a visually fluid GS video sequence.

Our method does more than RS correction. Besides eliminating geometric distortion, we also recover a set of high framerate GS images in order, for example, to recover 960 GS images from two 480-height RS images to get a GS video with 480$\times$ frame rate up-conversion. This is particularly challenging as one has to ensure the temporal smoothness of the recovered video sequence.

We formulate the bidirectional RS undistortion flows to characterize the pixel-wise RS-aware pixel displacement, and further advance two calculation methods for the mutual conversion between varying RS undistortion flows corresponding to different scanlines, \emph{i.e.}, constant velocity propagation and constant acceleration propagation. To this end, we prove that the scaling factor is in the interval of $(-1,1)$ when correcting an RS image to its middle-scanline GS image. As a result, we propose a data-driven solution for RSSR with good interpretability, which intrinsically encapsulates the underlying RS geometry that more sophisticated methods (\emph{e.g.}, \cite{zhuang2017rolling,zhuang2020homography}) struggle to learn. Also, we propose to estimate the camera acceleration to improve the accuracy and adaptation of GS video sequence recovery.

Our proposed geometry-aware RSSR pipeline employs a cascaded architecture to extract a latent high framerate GS video sequence from two consecutive RS images.
Firstly, we estimate the bidirectional optical flows by using the classic PWC-Net \cite{sun2018pwc}. Secondly, we design an encoder-decoder UNet network to learn the scaling factor of each pixel (\emph{i.e.}, the middle-scanline correlation map in Subsection~\ref{sec:Connection_UF_OF}) such that the middle-scanline RS undistortion flows can be inferred. Meanwhile, the RS undistortion flows for any scanline can be associated and propagated explicitly. Finally, the softmax splatting \cite{niklaus2020softmax} is used to produce the high framerate GS video frames at arbitrary scanlines.
Our vanilla RSSR network can be trained end-to-end and only the middle-scanline GS images are needed for supervision.
Additionally, the first-scanline GS images can be leveraged as a supervisor to estimate the camera acceleration, thereby obtaining better RS inversion results.
Since none of the learned network parameters are time-dependent, it can synthesize as many GS frames as needed, \emph{i.e.}, yielding a GS video with an arbitrary framerate.
It is also worth mentioning that our approach has a very high efficiency of RS inversion due to the geometry-aware propagation.
Extensive experiment results on benchmark datasets demonstrate that
our approach not only outperforms state-of-the-art methods in both RS effect removal and inference efficiency, but also produces a smooth and coherent video sequence surpassing all previous methods.

A preliminary version of this paper appeared in \cite{fan2021rssr}. In this extended version,
1) we extend the formulation to a more realistic constant acceleration motion model and then propose an alternative but promising Acceleration-Net to further recover higher quality GS video frames; 2) we provide more experimental analyses on DeepUnrollNet \cite{liu2020deep}; 3) we demonstrate the advantages of our approach over the state-of-the-art video frame interpolation methods and the two-stage methods; 4) we provide a more comprehensive generalization and robustness analysis; 5) we conduct a more in-depth comparison with the analytic solution in \cite{zhuang2017rolling}; 6) we confirm the effectiveness of our approach for 3D reconstruction. Note that the extension of our constant acceleration propagation is also geometrically driven.

To summarize, our main contributions are as follows:
\begin{itemize}
	\item We give a detailed proof to the scanline-dependent bidirectional RS undistortion flows. 
	\vspace{0.25mm}
	\item We propose the first learning-based RSSR solution for latent global-shutter video sequence recovery. 
	\vspace{0.25mm}
	\item We develop an effective Acceleration-Net for constant acceleration propagation, enhancing the generality and accuracy of constant velocity propagation in reverting rolling-shutter cameras.
	\vspace{0.25mm}
\end{itemize}


\section{Related Work} \label{sec:relatedWork}
\noindent\textbf{Video frame interpolation (VFI).}
After extensive research in recent years, VFI can be roughly classified into phase-based \cite{meyer2015phase,meyer2018phasenet}, kernel-based \cite{choi2020channel,liu2017video} and flow-based \cite{park2020bmbc,jiang2018super,niklaus2020softmax} approaches.
Notably, with advances in optical flow estimation \cite{sun2018pwc,teed2020raft}, flow-based VFI methods have been actively adopted to explicitly exploit motion information.
Not only limited to linear flow interpolation \cite{jiang2018super,niklaus2018context,niklaus2020softmax}, many variants are dedicated to better intermediate flow estimation, such as quadratic \cite{xu2019quadratic}, rectified quadratic \cite{liu2020enhanced}, and cubic \cite{chi2020all} approximations.
Moreover, the initial flow field is enhanced in DAIN \cite{bao2019depth} via a depth-aware flow projection layer. The symmetric motion field is constructed in BMBC \cite{park2020bmbc} to generate the intermediate flow directly.
On the other side, more attention is paid to detail refinement and fusion, including contextual warping \cite{niklaus2018context,bao2019depth}, occlusion inference \cite{xue2019video,jiang2018super}, cycle constraints \cite{reda2019unsupervised,liu2019deep}, joint deblurring and frame rate up-conversion \cite{shen2020blurry}, and softmax splatting \cite{niklaus2020softmax} for more efficient forward warping, \emph{etc.}
Unfortunately, these VFI methods are tailored to GS cameras, which cannot correctly synthesize in-between frames for RS inputs.
In contrast, our method can not only efficiently interpolate scanline-arbitrary intermediate frames, but also can successfully remove the RS artifacts.


\vspace{0.25mm}
\noindent\textbf{Traditional non-learning RS correction.}
Over the last decade, several works have revisited the RS geometric model to remove the RS effect \cite{lao2020rolling,rengarajan2016bows,forssen2010rectifying,hedborg2012rolling,im2018accurate,saurer2013rolling,wang2020relative,fan2021rsdpsnet,purkait2017rolling,lao2021solving}.
Grundmann \emph{et al.} \cite{grundmann2012calibration} employed a homography mixture to achieve joint RS removal and video stabilization.
The occlusion-aware undistortion method \cite{vasu2018occlusion} removed the depth-dependent RS distortions from a specific setting of $\ge3$ RS images, assuming a piece-wise planar 3D scene.
Zhuang \emph{et al.} \cite{zhuang2017rolling} proposed estimating the full camera motion by a differential formulation to remove RS distortions in two consecutive RS images. Such a model has achieved considerable success, but further improvements have appeared challenging, due to the difficulties of making it robust and efficient to various situations (\emph{e.g.}, relying too much on the initial optical flow estimation \cite{zhuang2020homography}).
Subsequently, Zhuang and Tran \cite{zhuang2020homography} put forward a differential RS homography to model the scanline-varying RS camera poses, which can be used to perform RS-aware image stitching and correction.
Fan \emph{et al.} \cite{fan2021rsstereo} extended \cite{zhuang2017rolling} and presented an RS-stereo-aware differential epipolar constraint to correct the RS stereo image pairs.
Albl \emph{et al.} \cite{albl2020two} explored a simple two-camera rig with different shutter directions to undistort the RS images acquired by a smartphone.
Wu \emph{et al.} \cite{wu2021simultaneous} dealt with video stabilization and RS removal jointly.

\vspace{0.25mm}
\noindent\textbf{Deep-learning based RS correction.}
Recently, convolutional neural networks (CNN) have been used to achieve more flexible and efficient RS correction.
Rengarajan \emph{et al.} \cite{rengarajan2017unrolling} proposed the first CNN to correct a single RS image by assuming a simple affine model.
Zhuang \emph{et al.} \cite{zhuang2019learning} extended \cite{rengarajan2017unrolling} to learn the RS geometry from a single frame through two independent networks, yielding geometrically more correct results.
Zhong \emph{et al.} \cite{zhong2021rscd} simultaneously handled RS correction and deblurring.
Liu \emph{et al.} \cite{liu2020deep} and Fan \emph{et al.} \cite{fan2021sunet} used two consecutive RS images as input and designed specialized CNNs to predict one GS image.
To the best of our knowledge, our RSSR model is the first that is developed to learn the mapping from two consecutive RS frames to high framerate GS video frames corresponding to any scanline.

\vspace{-0.65mm}
\section{Differential Forward RS Geometry} \label{sec:preliminaries}
\noindent\textbf{GS-aware forward warping.}
Assuming that the GS camera experiences constant linear velocity $\mathbf{v}=[v_1, v_2, v_3]^{\text T}$ and angular velocity $\bm{\omega}=[\omega_1, \omega_2, \omega_3]^{\text T}$ over two consecutive frames 1 and 2, a 3D point $\mathbf{X}$ with depth $Z$ is observed by the camera to move with the 3D velocity as $-{\mathbf v}-{\bm \omega} \times {\mathbf X}$ \cite{longuet1980interpretation}. Projecting this 3D velocity into the 2D image plane yields the image motion field ${\mathbf f}$, which is usually approximated by the optical flow vector $({\mathbf f}_u,{\mathbf f}_v)^{\text T}$ under the brightness constancy assumption, at pixel ${\mathbf x}=(x,y)$ as \cite{longuet1980interpretation}:
\begin{equation}\label{eq:1}
{\mathbf f} = \frac{{\mathbf A} {\mathbf v}}{Z} + {\mathbf B} {\bm \omega}  \buildrel \Delta \over=  \pi({\mathbf v},{\bm \omega},{\mathbf x},Z,f),
\end{equation}
where
\begin{equation}\label{eq:2}
\begin{split}
{\mathbf A} &= \left[\begin{array}{ccc}
-{f} & 0 & x \\
0 & -{f} & y
\end{array}\right],\\
{\mathbf B} &= \left[\begin{array}{ccc}
\frac{x y}{{f}} & -\left(f+\frac{x^{2}}{{f}}\right) & y \\
\left(f+\frac{y^{2}}{{f}}\right) & -\frac{x y}{{f}} & -x
\end{array}\right].
\end{split}
\end{equation}
Here, $(x,y)$ is the normalized image coordinate and $f$ denotes the focal length.

\vspace{0.5mm}
\noindent\textbf{RS forward motion parameterization.}
Since all RS scanlines are successively and instantaneously exposed one by one at different times, each scanline possesses a different optical center.
To account for the scanline-varying camera poses, we make full use of the small motion hypothesis.
Given the inter-frame camera velocities $({\mathbf v},{\bm \omega})$ between the two first scanlines of two consecutive RS images, the subtle intra-frame relative motion could be obtained by interpolation \cite{patron2015spline,saurer2013rolling,zhuang2017rolling,zhuang2020homography}.
Specifically, \cite{zhuang2017rolling} derived the linear interpolation scheme under the assumption of constant velocity motion and the quadratic interpolation scheme under the assumption of constant acceleration motion.
Formally, the camera position and rotation $({\mathbf p}_1^{s_1},{\mathbf r}_1^{s_1})$ (resp. $({\mathbf p}_2^{s_2},{\mathbf r}_2^{s_2})$) of the $s_1$-th (resp. $s_2$-th) scanline in frame 1 (resp. 2) \emph{w.r.t.} the first scanline of frame 1 can be expressed as:
\begin{equation}\label{eq:3}
\begin{array}{ll}
\mathbf{p}_{1}^{s_{1}}=\lambda_{1}^{s_{1}} \mathbf{v}, \,\,& \mathbf{r}_{1}^{s_{1}}=\lambda_{1}^{s_{1}} \bm{\omega}, \\
\mathbf{p}_{2}^{s_{2}}=\lambda_{2}^{s_{2}} \mathbf{v}, & \mathbf{r}_{2}^{s_{2}}=\lambda_{2}^{s_{2}} \bm{\omega},
\end{array}
\end{equation}
where
\begin{equation}\label{eq:4}
\left\{
\begin{aligned}
\lambda_{1}^{s_{1}}&=\frac{\gamma s_{1}}{h} \\
\lambda_{2}^{s_{2}}&=1+\frac{\gamma s_{2}}{h}
\end{aligned}
\right.,
\end{equation}
are for the constant velocity motion assumption, or
\begin{equation}\label{eq:5}
\left\{
\begin{aligned}
\lambda_{1}^{s_{1}}&=\frac{2}{k+2}\left(\frac{\gamma s_{1}}{h}+\frac{k}{2} \left(\frac{\gamma s_{1}}{h}\right)^{2}\right) \\
\lambda_{2}^{s_{2}}&=\frac{2}{k+2}\left(1+\frac{\gamma s_{2}}{h}+\frac{k}{2}\left(1+\frac{\gamma s_{2}}{h}\right)^{2}\right)
\end{aligned}
\right.,
\end{equation}
are for the constant acceleration motion assumption.
Note that here, $\gamma={h\tau_r}/{{\tau}}$ represents the readout time ratio \cite{zhuang2017rolling}, \emph{i.e.}, the ratio between the total readout time and the total time $\tau$ in an RS frame, which can be calibrated by \cite{meingast2005geometric,oth2013rolling}.
$\tau_r$ is the readout time of a single scanline and $h$ is the total number of scanlines in an image.
$k$ represents an extra unknown motion parameter describing the acceleration, denoted as the acceleration factor ($k>0$ for acceleration and $k<0$ for deceleration). It is easy to verify that Eq.~\eqref{eq:5} will reduce to Eq.~\eqref{eq:4} when the acceleration factor vanishes, \emph{i.e.}, $k=0$.
Consequently, the relative motion between $s_1$-th and $s_2$-th scanlines satisfies:
\begin{equation}\label{eq:6}
\begin{aligned}
\mathbf{v}_{s_{1} s_{2}} &=\mathbf{p}_{2}^{s_{2}}-\mathbf{p}_{1}^{s_{1}}=\left(\lambda_{2}^{s_{2}}-\lambda_{1}^{s_{1}}\right) \mathbf{v}, \\
\bm{\omega}_{s_{1} s_{2}} &=\mathbf{r}_{2}^{s_{2}}-\mathbf{r}_{1}^{s_{1}}=\left(\lambda_{2}^{s_{2}}-\lambda_{1}^{s_{1}}\right) \bm{\omega}.
\end{aligned}
\end{equation}
Further details can be found in \cite{zhuang2017rolling}.

\vspace{0.5mm}
\noindent\textbf{RS-aware forward warping.}
As ${\mathbf f}_v=s_2-s_1$, plugging Eqs.~\eqref{eq:4} and \eqref{eq:6} into Eq.~\eqref{eq:1}, we can relate the forward optical flow of pixel ${\mathbf x}$ in frame 1 in terms of the RS camera velocity and the 3D point as:
\begin{equation}\label{eq:7}
\left[ {\begin{array}{*{20}{c}}
	{\mathbf f}_u\\
	{\mathbf f}_v
	\end{array}} \right]  = \alpha \left[ {\begin{array}{*{20}{c}}
	\pi_u({\mathbf v},{\bm \omega},{\mathbf x},Z,f)\\
	\pi_v({\mathbf v},{\bm \omega},{\mathbf x},Z,f)
	\end{array}} \right],
\end{equation}
where
\begin{equation}\label{eq:8}
\alpha = 1 + \frac{\gamma {\mathbf f}_v}{h}
\end{equation}
represents the RS-aware forward interpolation factor under the constant velocity motion model, which depends on the scanline involved in the optical flow. $\pi_u(\cdot)$ and $\pi_v(\cdot)$ denotes the first and second entries of $\pi(\cdot)$, respectively.
Hence, the geometric inaccuracies induced by the RS effect could be compensated by simply scaling the optical flow vector of each pixel ${\mathbf x}$.
Note that this paper does not investigate the RS-aware optical flow modeling under the assumption of constant acceleration motion due to its complexity.

\vspace{-0.5mm}
\section{RS Undistortion Flow vs. Optical Flow} \label{sec:flows_versus}
\subsection{RS imaging mechanism}
As shown in Fig.~\ref{fig:Innovation_example_fig1}, the RS camera exposes each scanline in sequence, which results in a different local frame for each scanline.
The RS image $\mathbf{I}_{r}$ can therefore be regarded as the result of successive row-by-row combinations of virtual GS image sequences over the period of camera readout time, \emph{i.e.}, formulating the RS imaging model as:
\begin{equation}\label{eq:9}
\left\lfloor\mathbf{I}_{r}(\mathbf{x})\right\rfloor_{s}=\left\lfloor\mathbf{I}^{s}_{g}(\mathbf{x})\right\rfloor_{s}, \,\,\, 0\le s \le h-1,
\end{equation}
where $\mathbf{I}^{s}_{g}$ is the virtual GS image captured at time $s\tau_r$. $\lfloor\cdot\rfloor_s$ indicates extracting the $s$-th scanline.
On the contrary, the RSSR task aims to reverse the above RS imaging formulation, \emph{i.e.}, estimating the displacement vectors ${\mathbf u}_{r \to s}$ of pixel ${\mathbf x}$ from the RS image to the virtual GS image at $s$-th scanline such that
\begin{equation}\label{eq:10}
\mathbf{I}_{r}(\mathbf{x})=\mathbf{I}^{s}_{g}(\mathbf{x}+\mathbf{u}_{r \to s}), \,\,\, 0\le s \le h-1.
\end{equation}
Specifically, it is to estimate the dense \emph{RS undistortion flow} ${\mathbf U}_{r \to s}$ that corresponds to scanline $s$ by stacking ${\mathbf u}_{r \to s}$ for all pixels in matrix form. Note that similar to \cite{liu2020deep}, ${\mathbf U}_{r \to s}$ is a forward warping operation, \emph{e.g.}, softmax splatting \cite{niklaus2020softmax}.

\begin{figure}[!t]
	\centering
	\includegraphics[width=0.42\textwidth]{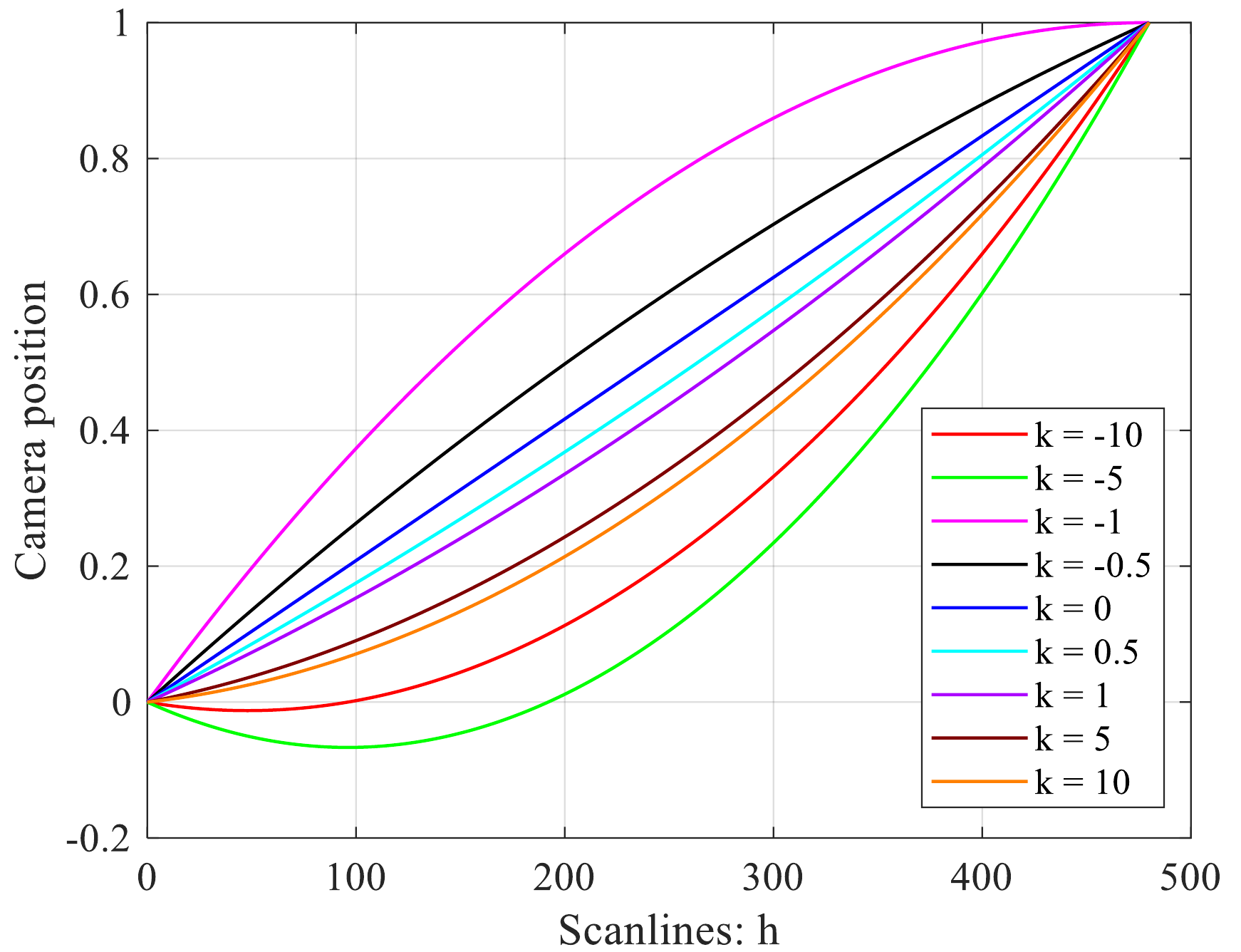}\vspace{-2.5mm}
	\caption{Generality of scanline pose interpolation given the more realistic constant acceleration motion assumption. Here, similar to \cite{zhuang2020homography}, we assume $\gamma=1$, $h=480$, a unit displacement length between two first scanlines, and different values of acceleration factor $k$. \label{fig:fig_acceleration_model}}
	\vspace{-1.5mm}
\end{figure}

\subsection{Bidirectional RS undistortion flows}
To deliver each RS pixel ${\mathbf x}$ on the $\eta$-th scanline in frame 1 to its GS canvas defined by the pose corresponding to $s$-th scanline of frame 1, the RS-aware forward warping displacement vector of pixel ${\mathbf x}$ can be formulated as:
\begin{equation}\label{eq:11}
\left[ {\begin{array}{*{20}{c}}
	{\mathbf u}_u\\
	{\mathbf u}_v
	\end{array}} \right] = \beta \left[ {\begin{array}{*{20}{c}}
	\pi_u({\mathbf v},{\bm \omega},{\mathbf x},Z,f)\\
	\pi_v({\mathbf v},{\bm \omega},{\mathbf x},Z,f)
	\end{array}} \right],
\end{equation}
where
\begin{equation}\label{eq:12}
\beta = \frac{\gamma (s-\eta)}{h}
\end{equation}
represents the RS-aware forward undistortion factor under the constant velocity motion model, which depends on the scanline offset between the target scanline $s$ and the current scanline $\eta$. Note that $0 \le s,\eta \le h-1$ and the pixels on the $s$-th scanline of the RS image remain unchanged. Stacking the forward warping displacement vectors of all pixels in frame 1, we can obtain the pixel-wise forward RS undistortion flow $\mathbf U_{1 \to s}$ of frame 1, which could be used to restore the latent GS image corresponding to any scanline $s \in [0,h-1]$ of frame 1.

Unfortunately, Eqs.~\eqref{eq:7} and \eqref{eq:11} describe merely the forward optical flow (\emph{i.e.}, from frame 1 to frame 2) and the forward RS undistortion flow (\emph{i.e.}, from frame 1 to its scanline $s$), respectively. Given two input consecutive RS images, one cannot remove the geometric RS distortion in frame 2 if only using the RS-aware forward warping alone.
Therefore, we further propose an \emph{RS-aware backward warping model} accounting for frame 2, as shown in Proposition~\ref{p1}.
\begin{proposition}\label{p1}
The RS-aware backward warping, \emph{i.e.}, the backward optical flow from frame 2 to frame 1 and the backward RS undistortion flow from frame 2 to its scanline $s$, can be modeled by simply taking a negative readout time ratio under the constant velocity motion assumption.
\end{proposition}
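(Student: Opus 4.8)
The plan is to exploit the temporal symmetry of the constant-velocity pose interpolation in Eqs.~\eqref{eq:3}--\eqref{eq:4} and to show that re-anchoring the entire construction to the \emph{first scanline of frame 2} (instead of frame 1) is algebraically equivalent to the original forward derivation with $\gamma$ replaced by $-\gamma$. Concretely, I would first re-express the scanline poses of \eqref{eq:3} relative to the first scanline of frame~2 by subtracting the offset $\lambda=1$ that separates the two reference origins, obtaining $\lambda_2^{s_2}-1=\gamma s_2/h$ for a scanline of frame~2 and $\lambda_1^{s_1}-1=\gamma s_1/h-1$ for a scanline of frame~1. Reversing the time direction then amounts to reading the motion with the negated camera velocities $(-\mathbf v,-\bm\omega)$, so that the relevant motion field becomes $\pi(-\mathbf v,-\bm\omega,\mathbf x,Z,f)=-\pi(\mathbf v,\bm\omega,\mathbf x,Z,f)$ by the linearity of \eqref{eq:1} in $(\mathbf v,\bm\omega)$.

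Next I would compute the two backward quantities directly. For the backward optical flow carrying a pixel on scanline $a$ of frame~2 to its landing scanline $b$ of frame~1, the relative camera motion is $(\lambda_1^{b}-\lambda_2^{a})(\mathbf v,\bm\omega)=\bigl(-1+\gamma(b-a)/h\bigr)(\mathbf v,\bm\omega)$; since the backward vertical flow satisfies $\mathbf f_v=b-a$, factoring out the leading sign and absorbing it into $\pi(-\mathbf v,-\bm\omega,\cdot)$ yields $\mathbf f=\bigl(1+(-\gamma)\mathbf f_v/h\bigr)\,\pi(-\mathbf v,-\bm\omega,\mathbf x,Z,f)$, which is exactly \eqref{eq:7}--\eqref{eq:8} under $\gamma\mapsto-\gamma$. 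For the backward RS undistortion flow of frame~2 (delivering a pixel on scanline $\eta$ to the canvas of scanline $s$), the intra-frame relative motion is $(\lambda_2^{s}-\lambda_2^{\eta})(\mathbf v,\bm\omega)=\gamma(s-\eta)/h\,(\mathbf v,\bm\omega)$; rewriting it against $\pi(-\mathbf v,-\bm\omega,\cdot)$ gives the backward factor $\beta=(-\gamma)(s-\eta)/h$, i.e.\ \eqref{eq:11}--\eqref{eq:12} with $\gamma\mapsto-\gamma$. Both results thus collapse onto the forward formulas after a single sign flip of the readout ratio.

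Finally, I would package these two computations into the statement: because the forward optical flow and forward undistortion flow of frame~1 are fully characterized by Eqs.~\eqref{eq:8} and \eqref{eq:12} through the readout ratio $\gamma$, and because their frame-2 counterparts obey the identical expressions with $\gamma$ replaced by $-\gamma$, the RS-aware backward warping is obtained from the forward model by merely taking a negative readout time ratio. I expect the main obstacle to be the careful sign bookkeeping: one must simultaneously track (i) the shift of the reference origin from frame~1 to frame~2, (ii) the time reversal that negates $(\mathbf v,\bm\omega)$ and hence flips the sign of $\pi$, and (iii) the sign of the vertical flow $\mathbf f_v$ in the backward direction. The content of the proposition is precisely that these three effects conspire so that only the substitution $\gamma\mapsto-\gamma$ survives, and confirming that no residual sign is left over is where the derivation must be carried out with care.
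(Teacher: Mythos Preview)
Your proposal is correct and follows essentially the same route as the paper: define the backward velocities $(\mathbf v',\bm\omega')=(-\mathbf v,-\bm\omega)$, compute the scanline-to-scanline relative motion from the same $\lambda$-interpolation of Eq.~\eqref{eq:4}, and read off $\alpha'=1-\gamma\mathbf f'_v/h$ and $\beta'=-\gamma(s-\eta)/h$, which is exactly Eqs.~\eqref{eq:8} and \eqref{eq:12} under $\gamma\mapsto-\gamma$. The paper's write-up is slightly more direct (it introduces $\mathbf v'$ up front rather than speaking of ``re-anchoring'' to frame~2), but the computation and the sign bookkeeping you describe are identical in substance.
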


\begin{proof}
To formulate the RS-aware backward warping accounting for frame 2, the backward inter-frame camera velocities $({\mathbf v}',{\bm \omega}')$ should obey:
${\mathbf v}' =-{\mathbf v}$ and ${\bm \omega}' =-{\bm \omega}$.
Let $Z'$ denote the depth of each pixel ${\mathbf x}'$ in frame 2 and $({\mathbf f}'_u,{\mathbf f}'_v)$ the backward optical flow vector from frame 2 to frame 1. In complete analogy with the RS forward motion parameterizations in Section~\ref{sec:preliminaries}, we again derive the relative motion between the scanline $s_1$ of frame 1 and the scanline $s_2$ of frame 2 as:
\begin{equation}\label{eq:s2}
\begin{aligned}
\mathbf{v}_{s_{2} s_{1}} &=\left(\lambda_{2}^{s_{2}}-\lambda_{1}^{s_{1}}\right) \mathbf{v}', \\
\bm{\omega}_{s_{2} s_{1}} &=\left(\lambda_{2}^{s_{2}}-\lambda_{1}^{s_{1}}\right) \bm{\omega}',
\end{aligned}
\end{equation}
where $\lambda_{1}^{s_{1}}$ and $\lambda_{1}^{s_{2}}$ satisfy Eq.~\eqref{eq:4} as well.
Note that ${\mathbf f}'_v=s_1-s_2$. In the same way, we can obtain RS-aware backward warping model for the backward optical flow at pixel ${\mathbf x}'$ as:
\begin{equation}\label{eq:s4}
\left[ {\begin{array}{*{20}{c}}
	{\mathbf f}'_u\\
	{\mathbf f}'_v
	\end{array}} \right]  = \alpha' \left[ {\begin{array}{*{20}{c}}
	\pi_u({\mathbf v}',{\bm \omega}',{\mathbf x}',Z',f)\\
	\pi_v({\mathbf v}',{\bm \omega}',{\mathbf x}',Z',f)
	\end{array}} \right],
\end{equation}
where
\begin{equation}\label{eq:s5}
\alpha' = 1 - \frac{\gamma {\mathbf f}'_v}{h}
\end{equation}
represents the RS-aware backward interpolation factor under the constant velocity motion model.

Furthermore, we derive the RS-aware backward warping displacement vector, which transforms each RS pixel ${\mathbf x}'$ on $\eta$-th scanline of frame 2 to obtain a distortion-free frame defined by the pose of the $s$-th scanline of frame 2 by
\begin{equation}\label{eq:s6}
\left[ {\begin{array}{*{20}{c}}
	{\mathbf u}'_u\\
	{\mathbf u}'_v
	\end{array}} \right] = \beta' \left[ {\begin{array}{*{20}{c}}
	\pi_u({\mathbf v}',{\bm \omega}',{\mathbf x}',Z',f)\\
	\pi_v({\mathbf v}',{\bm \omega}',{\mathbf x}',Z',f)
	\end{array}} \right],
\end{equation}
where
\begin{equation}\label{eq:s7}
\beta' = -\frac{\gamma (s-\eta)}{h}
\end{equation}
represents the RS-aware backward undistortion factor under the constant velocity motion model. Similarly, the GS image corresponding to any scanline $s \in [0,h-1]$ of frame 2 can be recovered.

\end{proof}

Heretofore, we have obtained bidirectional RS undistortion flows $\mathbf U_{1 \to s}$ and $\mathbf U_{2 \to s}$ under the assumption of constant velocity motion, which can warp two consecutive RS images to their GS counterparts corresponding to their respective scanlines $s \in \left[0,h-1\right]$, \emph{i.e.}, activating the rolling-shutter temporal super-resolution.
However, despite its simplicity in compensating for inter-frame motion, as shown in Fig.~\ref{fig:fig_acceleration_model}, a more realistic constant acceleration motion assumption might enhance the generality of the RSSR task in practice. To this end, for the constant acceleration motion model, we utilize Eqs.~\eqref{eq:3} and ~\eqref{eq:5} and rewrite the bidirectional RS-aware undistortion factor in Eq.~\eqref{eq:12} as:
\begin{equation}\label{eq:13}
\beta = \frac{\gamma (s-\eta)}{h} \cdot \frac{2h+k\gamma (s-\eta)}{h(k+2)}.
\end{equation}

Note that due to the concise and well-defined characteristics of constant velocity modeling, we will mainly focus on it for further analysis. Note also that the constant acceleration formulation in this paper is mainly used to further advance the accuracy and compatibility of mutual conversion between varying RS undistortion flows.

\begin{figure}[!t]
	\centering
	\includegraphics[width=0.4872\textwidth]{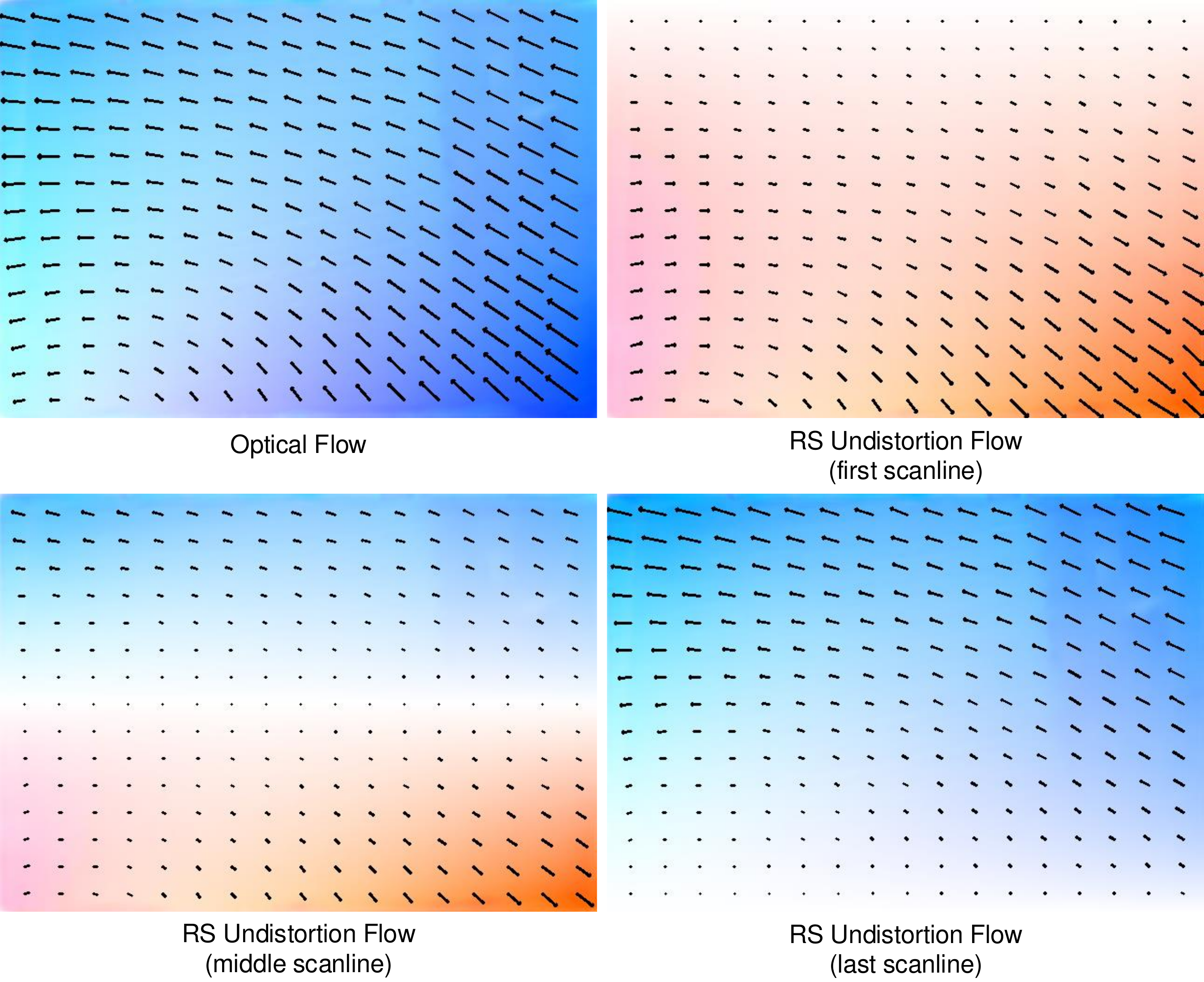}\vspace{-1.0mm}
	\caption{RS undistortion flow versus regular optical flow. Here, forward flows are visualized according to \cite{sun2010secrets}. Compared to the isotropically smooth optical flow map, the RS undistortion flow map exhibits a more significant scanline dependence. On the one hand, the RS undistortion flows near the target scanline appear as lighter colors (\emph{i.e.}, smaller warping displacement values). On the other hand, the RS undistortion flows corresponding to pixels smaller than and larger than the target scanline show different colors (\emph{i.e.}, different warping displacement directions).\label{fig:UF_VS_OF}}
	\vspace{-1.0mm}
\end{figure}
\subsection{Connection between RS undistortion flow and optical flow}\label{sec:Connection_UF_OF}
Note that the optical flow in Eq.~\eqref{eq:7} exhibits the pixel displacement over two consecutive RS frames, while the RS undistortion flow in Eq.~\eqref{eq:11} models the pixel displacement between the RS frame 1 (or frame 2) and the GS frame at scanline $s$.
Without loss of generality, taking the forward warping as an example (note that the backward warping is similar, except that $\gamma$ is negative).
First, eliminating ${\mathbf f}_v$ on the right hand side of Eq.~\eqref{eq:7}, the optical flow of pixel ${\mathbf x}$ can be rewritten as:
\begin{equation}\label{eq:14}
\left[ {\begin{array}{*{20}{c}}
	{\mathbf f}_u\\
	{\mathbf f}_v
	\end{array}} \right]  = \frac{h}{h-\gamma \pi_v} \left[ {\begin{array}{*{20}{c}}
	\pi_u\\
	\pi_v
	\end{array}} \right],
\end{equation}
where $\pi_u$ and $\pi_v$ are abbreviations, determined by camera parameters, camera motions and 3D depths.
Then, through Eqs.~\eqref{eq:11}, ~\eqref{eq:12} and ~\eqref{eq:14}, we establish the connection between the forward/backward RS undistortion flow $({\mathbf u}_u,{\mathbf u}_v)^{\text T}$ and the forward/backward optical flow $({\mathbf f}_u,{\mathbf f}_v)^{\text T}$ at pixel $\mathbf x$ as:
\begin{equation}\label{eq:15}
\left[ {\begin{array}{*{20}{c}}
	{\mathbf u}_u\\
	{\mathbf u}_v
	\end{array}} \right] = c \left[ {\begin{array}{*{20}{c}}
	{\mathbf f}_u\\
	{\mathbf f}_v
	\end{array}} \right],
\end{equation}
where
\begin{equation}\label{eq:16}
c = \frac{\gamma (s-\eta) (h-\gamma\pi_v)}{h^2}
\end{equation}
denotes the forward/backward correlation factor between bidirectional RS undistortion flow and optical flow, which are distinguished by the sign of $\gamma$. Stacking $c$ for all pixels yields the forward and backward correlation maps ${\mathbf C}_{1 \to s}$ and ${\mathbf C}_{2 \to s}$, respectively.
The correlation map thus indicates the underlying RS geometry, \emph{i.e.}, camera motion and scene structure.
Note that the connection under the constant acceleration motion is not discussed in this paper.

Consequently, after obtaining the bidirectional optical flows between two consecutive RS frames (\emph{e.g.} via PWC-Net \cite{sun2018pwc} or RAFT \cite{teed2020raft}), we can scale them to obtain the bidirectional RS undistortion flows to further recover the latent GS images corresponding to specific scanline $s \in [0,h-1]$.
Note that the size and sign of the scaling factor (\emph{i.e.}, correlation factor $c$) rely on the underlying RS geometry.
As illustrated in Fig.~\ref{fig:UF_VS_OF}, we can intuitively observe that, compared to the isotropically smooth optical flow map, the RS undistortion flow is more typically scanline-dependent. The closer the pixel to scanline $s$ (assuming to correction to $s$-th scanline, \emph{e.g.}, the first scanline), the smaller the value of the RS undistortion flow is generally.
Moreover, we point out that Eq.~\eqref{eq:15} is essentially the core of RS correction in \cite{zhuang2017rolling}, but it requires sophisticated processing (such as RANSAC \cite{fischler1981random}, non-convex optimization, \emph{etc.}) to estimate accurate camera motion and scene depth based on the optical flow map. Nevertheless, potential gross errors in optical flow estimates could lead to severe artifacts in the texture-less or non-overlapping regions \cite{zhuang2020homography}, as elaborated in Subsection~\ref{Comparison_RSC}.

\begin{proposition}\label{p2}
	Given bidirectional optical flows ${\mathbf F}_{1 \to 2}$ and ${\mathbf F}_{2 \to 1}$, to recover the GS image corresponding to the camera pose of the middle scanline, the forward correlation map ${\mathbf C}_{1 \to m}$ (resp. backward correlation map ${\mathbf C}_{2 \to m}$) for the forward RS undistortion flow ${\mathbf U}_{1 \to m}$ (resp. backward RS undistortion flow ${\mathbf U}_{2 \to m}$) in Eq.~\eqref{eq:15} satisfies:
	\begin{equation} \label{eq:19}
	\left\lfloor{\mathbf C}_{1 \to m}\right\rfloor_{i,j} \in \left\{ \begin{array}{ll}
	(0,1) & \textrm{if $i<\frac h2$}\\
	0 & \textrm{if $i=\frac h2$}\\
	(-1,0) & \textrm{if $i>\frac h2$}\\
	\end{array} \right. ,
	\end{equation}
	resp.
	\begin{equation} \label{eq:20}
	\left\lfloor{\mathbf C}_{2 \to m}\right\rfloor_{i,j} \in \left\{ \begin{array}{ll}
	(-1,0) & \textrm{if $i<\frac h2$}\\
	0 & \textrm{if $i=\frac h2$}\\
	(0,1) & \textrm{if $i>\frac h2$}\\
	\end{array} \right. ,
	\end{equation}
	where $\lfloor\cdot\rfloor_{i,j}$ is the entry in $i$-th row and $j$-th column.
\end{proposition}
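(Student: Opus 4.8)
The plan is to specialize the correlation factor of Eq.~\eqref{eq:16} to the middle scanline $s=m=h/2$ and then recast it into a transparent pose-interpolation ratio, from which both the sign pattern and the magnitude bound $|c|<1$ will follow. Concretely, for the forward map I set $s=m$ in Eq.~\eqref{eq:16} to obtain $\lfloor\mathbf{C}_{1\to m}\rfloor_{i,j}=c=\gamma(m-\eta)(h-\gamma\pi_v)/h^2$ with the pixel lying on scanline $\eta=i$. The two things to establish are (i) the trichotomy of signs dictated by the position of $i$ relative to $h/2$, and (ii) that the magnitude never reaches $1$.

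First I would eliminate the unobservable $\pi_v$ in favour of the vertical flow $\mathbf{f}_v$. Rearranging Eq.~\eqref{eq:14} gives $h-\gamma\pi_v=h^2/(h+\gamma\mathbf{f}_v)$, so that $c=\gamma(m-\eta)/(h+\gamma\mathbf{f}_v)$. Substituting $\mathbf{f}_v=s_2-\eta$ together with the constant-velocity coefficients of Eqs.~\eqref{eq:3}, \eqref{eq:4} and \eqref{eq:6}, this collapses to the clean form $c=(\lambda_1^{m}-\lambda_1^{\eta})/(\lambda_2^{s_2}-\lambda_1^{\eta})$, i.e. the ratio of the pose displacement needed to reach the middle scanline of frame~1 to the pose displacement realised by the inter-frame optical flow. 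This reformulation is the crux, since it converts the claim into a statement about the ordering of the scalar pose parameters $\lambda$.

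Next I would pin down the ranges of the $\lambda$'s from $\gamma\in(0,1]$ and $0\le s_1,s_2\le h-1$: namely $\lambda_1^{s_1}=\gamma s_1/h\in[0,\gamma)\subset[0,1)$ and $\lambda_2^{s_2}=1+\gamma s_2/h\in[1,1+\gamma)$, whence $\lambda_1^{\eta}<1\le\lambda_2^{s_2}$ and the denominator $\lambda_2^{s_2}-\lambda_1^{\eta}$ is strictly positive. Consequently the sign of $c$ equals that of $\lambda_1^{m}-\lambda_1^{\eta}$, i.e. of $m-\eta$, which yields exactly the three cases of Eq.~\eqref{eq:19}: positive for $i<h/2$, zero for $i=h/2$, negative for $i>h/2$.

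For the magnitude I would split $|c|<1$ into its two halves. The bound $c<1$ is immediate, being equivalent to $\lambda_1^{m}<\lambda_2^{s_2}$, which holds since $\lambda_1^{m}<1\le\lambda_2^{s_2}$. The remaining bound $c>-1$ is equivalent to $\lambda_1^{\eta}<\tfrac12(\lambda_1^{m}+\lambda_2^{s_2})$; this is the delicate step and the main obstacle, as it only bites in the lower half ($i>h/2$, where $\lambda_1^{\eta}$ is largest) and is not implied by the crude ranges alone. Here I would invoke the small-motion hypothesis already assumed in Section~\ref{sec:preliminaries}: matched scanlines satisfy $s_2\approx\eta$, so $\lambda_2^{s_2}\approx1+\lambda_1^{\eta}$ and the inequality reduces to $\lambda_1^{\eta}<\lambda_1^{m}+1$, which is true because $\lambda_1^{\eta}<1$. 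Finally, the backward map follows verbatim: by Proposition~\ref{p1} the sole change is $\gamma\mapsto-\gamma$, sending $c$ to $c'=-\gamma(m-\eta)/(h-\gamma\mathbf{f}_v')$; the denominator remains positive while the numerator flips sign, so the trichotomy of Eq.~\eqref{eq:20} is exactly the mirror of Eq.~\eqref{eq:19}, which completes the argument.
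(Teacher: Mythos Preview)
Your argument is correct and complete, but it takes a genuinely different route from the paper. The paper keeps $\pi_v$ and decomposes $c_m = c_m^a \cdot c_m^b$ multiplicatively, with $c_m^a = (s-\eta)/h \in [-\tfrac12,\tfrac12]$ coming purely from scanline geometry and $c_m^b = \gamma(h-\gamma\pi_v)/h \in (0,2)$ coming from $\gamma \in (0,1]$ together with the small-motion assumption $|\pi_v|\ll h$; the product bound $|c_m|<1$ and the sign pattern then fall out immediately. You instead eliminate $\pi_v$ via Eq.~\eqref{eq:14} and recast $c$ as the pose-interpolation ratio $(\lambda_1^{m}-\lambda_1^{\eta})/(\lambda_2^{s_2}-\lambda_1^{\eta})$, which makes the sign trichotomy and the upper bound $c<1$ follow from the ordering $\lambda_1^{\eta}<1\le\lambda_2^{s_2}$ without invoking any motion hypothesis, and isolates the small-motion assumption as needed only for the lower bound $c>-1$. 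Your reformulation is geometrically more transparent and sharper about where the assumption actually bites; the paper's product form, on the other hand, has the practical payoff that it directly motivates the network design of Eq.~\eqref{eq:22}, where the learned map $\hat{\mathbf C}\in(0,1)$ plays the role of a rescaled $c_m^b$ and the normalized scanline offset $\mathbf T\in[-1,1]$ that of a rescaled $c_m^a$.
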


\begin{proof}
	We first decompose $c_m$ defined in Eq.~\eqref{eq:16} into
	\begin{equation}\label{eq:21}
	c_m = \frac{s-\eta}{h} \cdot \frac{\gamma (h-\gamma\pi_v)}{h} \buildrel \Delta \over= c_m^a \cdot c_m^b.
	\end{equation}
	Taking the case of $i<\frac h2$ in Eq.~\eqref{eq:19} as an example, \emph{i.e.}, $0\le \eta<\frac h2$.
	Since the GS image corresponding to the middle scanline is to be restored, $s=\frac h2$. Thus,
	$0 < c_m^a \le \frac 12$.
	In practice, $\pi_v$ indicates the prediction of the inter-frame vertical optical flow value (see Eq.~\eqref{eq:1}), which is usually much smaller than the total number of image scanlines $h$. Also, the readout time ratio $\gamma$ is less than or equal to 1 for a normal RS camera \cite{zhuang2017rolling,im2018accurate,ringaby2012efficient}, so $\gamma \in (0,1]$ when modeling the RS-aware forward warping. We thus arrive at $0< c_m^b < 2$. To sum up, $c_m \in \left(0,1\right)$, \emph{i.e.}, $\left\lfloor{\mathbf C}_{1 \to m}\right\rfloor_{i,j} \in \left(0,1\right)$ holds when $i<\frac h2$. Other cases can be proved similarly. Note that $\gamma \in [-1,0)$ when modeling the RS-aware backward warping.
\end{proof}

\subsection{Interconversion between varying RS undistortion flows}
Eq.~\eqref{eq:11} defines the RS undistortion flow that warps the RS frame to its GS counterpart corresponding to scanline $s$.
Assuming that two GS images corresponding to the $s_1$-th scanline and the $s_2$-th scanline are to be restored, it can be seen from Eqs.~\eqref{eq:11} and \eqref{eq:12} that the difference in the warping displacement vector of the same pixel ${\mathbf x}$ is only $(s_1-\eta)$ and $(s_2-\eta)$ under the constant velocity motion model. Note that ${\mathbf x}$ lies on scanline $\eta$. As a consequence, we can simply extend the RS undistortion flow of pixel ${\mathbf x}$ facing scanline $s_1$ to that for scanline $s_2$ by
\begin{equation}\label{eq:17}
\left[ {\begin{array}{*{20}{c}}
	{\mathbf u}_u^{s_2}\\
	{\mathbf u}_v^{s_2}
	\end{array}} \right] = \frac{s_2-\eta}{s_1-\eta} \left[ {\begin{array}{*{20}{c}}
	{\mathbf u}_u^{s_1}\\
	{\mathbf u}_v^{s_1}
	\end{array}} \right].
\end{equation}
Analogously for constant acceleration motion model, based on Eqs.~\eqref{eq:11} and \eqref{eq:13}, we can obtain:
\begin{equation}\label{eq:18}
\left[ {\begin{array}{*{20}{c}}
	{\mathbf u}_u^{s_2}\\
	{\mathbf u}_v^{s_2}
	\end{array}} \right] = \frac{(s_2-\eta)(2h+\varphi (s_2-\eta))}{(s_1-\eta)(2h+\varphi (s_1-\eta))} \left[ {\begin{array}{*{20}{c}}
	{\mathbf u}_u^{s_1}\\
	{\mathbf u}_v^{s_1}
	\end{array}} \right],
\end{equation}
where $\varphi \buildrel \Delta \over= k\gamma$ denotes an auxiliary variable that needs to be additionally estimated.
Note that the constant velocity propagation by Eq.~\eqref{eq:17} is a special case of the constant acceleration propagation by Eq.~\eqref{eq:18} at $\varphi=0$ (\emph{i.e.}, $k=0$).
At this point, it can be propagated explicitly by either Eq.~\eqref{eq:17} or Eq.~\eqref{eq:18}, which is differentiable, to obtain the RS undistortion flows corresponding to any scanline.

In other words, the resulting RS undistortion flows corresponding to different scanlines can be converted to each other, which is also the key to achieve our RSSR solution.
An example is shown in Fig.~\ref{fig:UF_VS_OF}.
One can find that the difference between varying RS undistortion flows is not only reflected in the distribution of the warping displacement size, but also the warping displacement direction is closely related to the scanline.

\begin{figure}[!t]
	\centering
	\includegraphics[width=0.493\textwidth]{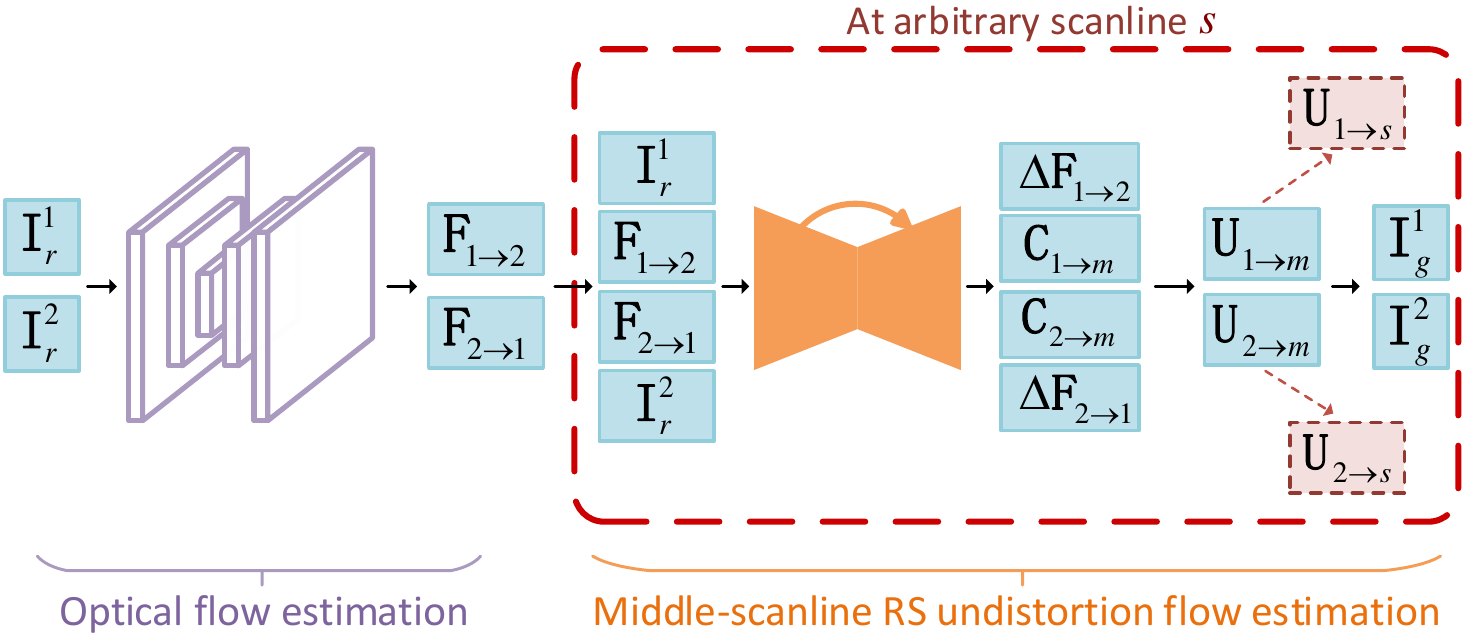}\vspace{-1.0mm}
	\caption{Overview of our RSSR network architecture. Given two consecutive RS frames, we first estimate the bidirectional optical flows. Then, we utilize an encoder-decoder UNet architecture with skip connections, which is associated with Eq.~\eqref{eq:22}, to resolve the correlation maps. Next, the middle-scanline RS undistortion flows can be explicitly calculated by Eq.~\eqref{eq:23}, while being certifiable. Finally, we adopt softmax splatting to generate the target middle-scanline GS frames. Note that our main network is devised to predict the latent GS images corresponding to the middle scanline during training. In particular, in the test phase, the RS undistortion flows for any scanline $s\in[0,h-1]$ can be propagated through Eq.~\eqref{eq:17} or Eq.~\eqref{eq:18} (\emph{cf.}, dashed arrow), followed by the recovery of the global-shutter image corresponding to scanline $s$.\label{fig:pipeline}}
	\vspace{-2.0mm}
\end{figure}

\section{RS Temporal Super-Resolution} \label{sec:RSSR}
The objective of RS temporal super-resolution (RSSR) is to invert the rolling-shutter imaging mechanism, \emph{i.e.}, recovering a high framerate GS video from two consecutive RS images.
In this section, we will utilize the aforementioned theoretical formulations, which reveal the intrinsic geometric properties of the RS inversion task, to elaborate a concise and effective end-to-end CNN to exploit the inherent regularity of data.
Inspired by the cascaded architecture for video frame interpolation in \cite{jiang2018super,niklaus2020softmax,bao2019depth}, we design a dedicated RSSR network to encapsulate the complete underlying RS geometry, as illustrated in Fig.~\ref{fig:pipeline}.

Our network takes two consecutive RS images ${\mathbf I}_r^1$ and ${\mathbf I}_r^2$ as input and predicts the GS image corresponding to any scanline (\emph{i.e.}, generating a high framerate GS video).
Our proposed vanilla RSSR pipeline can be distilled down to two main submodules: the optical flow estimation network $\mathcal{F}$ and the middle-scanline RS undistortion flow estimation network $\mathcal{U}$.
We first utilize $\mathcal{F}$ to obtain bidirectional optical flows, and then encode the relation between optical flows and middle-scanline RS undistortion flows by the middle-scanline correlation maps over $\mathcal{U}$. Finally, we compute the middle-scanline RS undistortion flows to produce two target middle-scanline GS frames by the softmax splatting \cite{niklaus2020softmax}.
Below we describe each sub-network in detail.

\vspace{0.5mm}
\noindent\textbf{Optical flow estimator.}
We employ the classical PWC-Net \cite{sun2018pwc} as our optical flow estimation network $\mathcal{F}$. As learning optical flow without ground truth (GT) supervision is extremely difficult, we fine-tune $\mathcal{F}$ on the RS benchmarks from the pre-trained model of PWC-Net in a self-supervised manner \cite{wang2018occlusion,liu2020self}.

\begin{figure}[!t]
	\centering
	\includegraphics[width=0.488\textwidth]{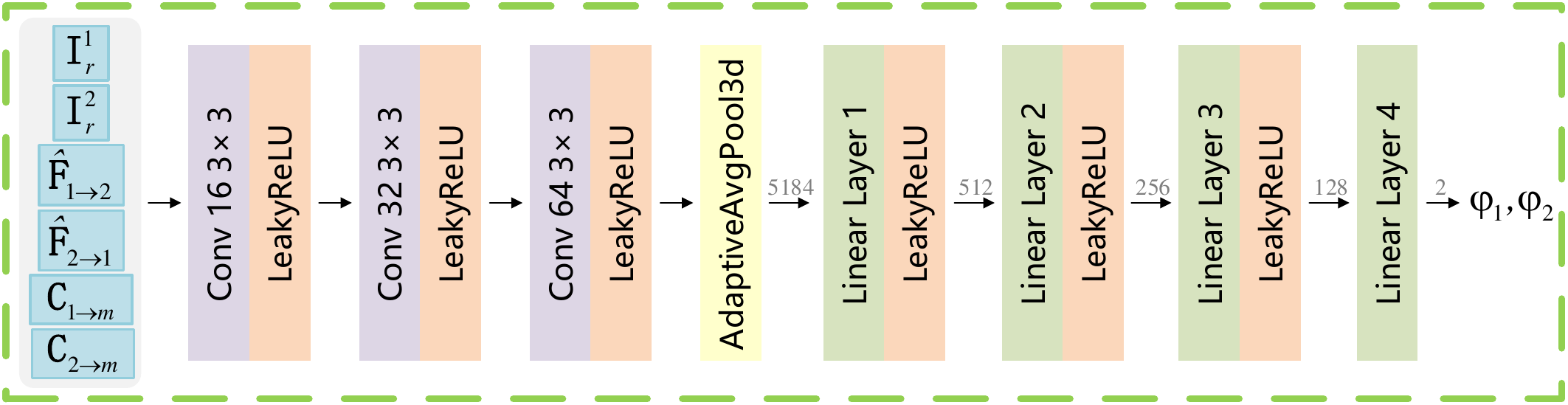}
	\caption{Architecture of our Acceleration-Net $\mathcal{A}$, which corresponds to Eq.~\eqref{eq:18}, \emph{i.e.}, formulating the dashed arrow in Fig.~\ref{fig:pipeline}. To improve the generality and accuracy of constant velocity propagation, $\mathcal{A}$ learns to predict the parameters of constant acceleration propagation ($\varphi_1, \varphi_2$ in particular). \label{fig:acceleration_net}}
	\vspace{-0.40mm}
\end{figure}

\vspace{0.5mm}
\noindent\textbf{Middle-scanline RS undistortion flow estimator.}
We adopt an encoder-decoder UNet architecture \cite{ronneberger2015unet,jiang2018super} with skip connections as our backbone network $\mathcal{U}$ to predict the correlation maps, modifying the last layer to output 6-channel predictions.
To comply with Proposition \ref{p2}, we fetch the first two channels in the output of network $\mathcal{U}$, followed by a \emph{Sigmoid} operation to map to the interval of $(0,1)$, resulting in $\hat{\mathbf C}_{1 \to m}$ and $\hat{\mathbf C}_{2 \to m}$.
Afterward, inspired by \cite{liu2020deep}, we multiply them with their respective normalized scanline offsets (\emph{i.e.}, $\mathbf{T}_{1 \to m}$ and $\mathbf{T}_{2 \to m}$, $\left\lfloor{\mathbf T}\right\rfloor_{i,j} \in [-1,1]$), which is defined as the normalized scanline offset between the captured pixel and that of the middle scanline.
Note that ${\mathbf T}_{1 \to m}=-{\mathbf T}_{2 \to m}$, and their values in middle-scanline are set to be small enough (not to be zero) for subsequent flow propagation.
The final correlation maps ${\mathbf C}_{1 \to m}$ and ${\mathbf C}_{2 \to m}$ thus can be recovered as:
\begin{equation} \label{eq:22}
\begin{array}{ll}
\mathbf{C}_{1 \to m}={\mathbf T}_{1 \to m} \odot \hat{\mathbf C}_{1 \to m}, \\
\mathbf{C}_{2 \to m}={\mathbf T}_{2 \to m} \odot \hat{\mathbf C}_{2 \to m},
\end{array}
\end{equation}
where $\odot$ is an element-wise multiplier. It is easy to verify that they all naturally fit for the theoretical bounds of Proposition \ref{p2}.
Furthermore, we use the last four channels of network $\mathcal{U}$ to estimate optical flow residuals $\Delta\mathbf{F}_{1 \to 2}$ and $\Delta\mathbf{F}_{2 \to 1}$ to enhance the alignment of edge details and the generality of the proposed model.
Finally, according to Eq.~\eqref{eq:15}, the bidirectional RS undistortion flows $\mathbf{U}_{1 \to m}$ and $\mathbf{U}_{2 \to m}$ can be obtained as:
\begin{equation} \label{eq:23}
\begin{array}{ll}
\mathbf{U}_{1 \to m}={\mathbf C}_{1 \to m} \odot \left(\mathbf{F}_{1 \to 2}+\Delta\mathbf{F}_{1 \to 2}\right), \\
\mathbf{U}_{2 \to m}={\mathbf C}_{2 \to m} \odot \left(\mathbf{F}_{2 \to 1}+\Delta\mathbf{F}_{2 \to 1}\right).
\end{array}
\end{equation}

Finally, the softmax splatting \cite{niklaus2020softmax} is used to warp the RS image $\mathbf{I}_r^1$ (resp. $\mathbf{I}_r^2$) to the target middle-scanline GS image $\mathbf{I}_g^1$ (resp. $\mathbf{I}_g^2$) through $\mathbf{U}_{1 \to m}$ (resp. $\mathbf{U}_{2 \to m}$).

\vspace{0.5mm}
\noindent\textbf{Extension to GS images under arbitrary scanlines.}
As the resulting RS undistortion flows corresponding to different scanlines can be converted to each other, based on Eq.~\eqref{eq:17} directly or on Eq.~\eqref{eq:18} with additional estimation of $k$, we can propagate the middle-scanline RS undistortion flows obtained above to reach the RS undistortion flows $\mathbf{U}_{1 \to s}$ and $\mathbf{U}_{2 \to s}$ for any scanline $s\in[0,h-1]$. Then, the corresponding high framerate GS images can be warped from the original two RS images, in the same vein, by using softmax splatting, \emph{i.e.}, achieving the rolling-shutter temporal super-resolution in a temporally coherent manner.

As an alternative, we further devise a compact and lightweight Acceleration-Net module to efficiently estimate the variable $\varphi$ in order to make Eq.~\eqref{eq:18} work for maintaining more realistic and accurate temporal coherence across corrected GS video frames.
Acceleration-Net $\mathcal{A}$ is constructed and executed after the trained optical flow estimation network $\mathcal{F}$ and middle-scanline RS undistortion flow estimation network $\mathcal{U}$.
Subsequently, as shown in Fig.~\ref{fig:acceleration_net}, the generated maps
are cascaded and fed into three $3\times3$ convolutional layers (each followed by a LeakyReLU activation layer) for extracting features, then a 3D adaptive average pooling layer together with four fully-connected layers is used to predict $\varphi_1$ and $\varphi_2$ in forward and backward warpings, respectively.
At this time, we freeze the parameters of $\mathcal{F}$ and $\mathcal{U}$, and train $\mathcal{A}$ separately by using the GT first-scanline GS images for supervision.
Note that our vanilla RSSR pipeline can work well with a constant velocity propagation, where the GT first-scanline GS image is not required, \emph{i.e.}, our entire pipeline is flexible and tractable.

\vspace{0.5mm}
\noindent\textbf{Loss function.}
Given a pair of consecutive RS images ${\mathbf I}_r^1$ and ${\mathbf I}_r^2$, our vanilla RSSR network predicts the bidirectional optical flows ${\mathbf F}_{1 \to 2}$ and ${\mathbf F}_{2 \to 1}$, the bidirectional RS undistortion flows ${\mathbf U}_{1 \to m}$ and ${\mathbf U}_{2 \to m}$, and the target middle-scanline GS images ${\mathbf I}_g^1$ and ${\mathbf I}_g^2$.
Note that when training Acceleration-Net separately, we continue to use ${\mathbf I}_g^1$ and ${\mathbf I}_g^2$ to refer specifically to the target first-scanline GS images for brevity.
Meanwhile, ${\mathbf I}_{gt}^1$ and ${\mathbf I}_{gt}^2$ denote their corresponding GT middle-scanline or first-scanline GS images.
Our overall loss function $\mathcal{L}$ is a linear combination of the reconstruction loss $\mathcal{L}_r$, perceptual loss $\mathcal{L}_p$ \cite{johnson2016perceptual}, warping loss $\mathcal{L}_w$, and smoothness loss $\mathcal{L}_s$:
\begin{equation}\label{eq:24}
\vspace{-0.4mm}
\mathcal{L} = \mu_r\mathcal{L}_r + \mu_p\mathcal{L}_p + \mu_w\mathcal{L}_w + \mu_s\mathcal{L}_s,
\vspace{-0.4mm}
\end{equation}
where $\mu_r$, $\mu_p$, $\mu_w$ and $\mu_s$ are hyper-parameters to balance different losses.
Note that when training the vanilla RSSR network, we set $\mu_r=10$, $\mu_p=1$, $\mu_w=10$, and $\mu_s=0.1$; when training Acceleration-Net $\mathcal{A}$ based on the pre-trained $\mathcal{F}$ and $\mathcal{U}$, we remove the warping loss $\mathcal{L}_w$ and the smoothness loss $\mathcal{L}_s$ while keeping the rest the same, \emph{i.e.}, $\mu_w=\mu_s=0$.
In the following, we describe in detail how to supervise the training of the vanilla RSSR network.

\emph{Reconstruction loss $\mathcal{L}_r$}: We measure the pixel-wise reconstruction qualities of the corrected middle-scanline GS images as:
\begin{equation}\label{eq:25}
\vspace{-0.4mm}
\mathcal{L}_{r} = \sum_{i=1}^{2}\left\|{\mathbf I}_{g}^{i}-{\mathbf I}_{gt}^{i}\right\|_{1}.
\vspace{-0.4mm}
\end{equation}

\emph{Perceptual loss $\mathcal{L}_p$}: To mitigate the blur in the corrected middle-scanline GS images, we use a perceptual loss $\mathcal{L}_p$ \cite{johnson2016perceptual} to preserve details of the predictions and make estimated GS images sharper. Similar to \cite{liu2020deep}, $\mathcal{L}_p$ is defined as:
\begin{equation}\label{eq:26}
\vspace{-0.4mm}
\mathcal{L}_{p} = \sum_{i=1}^{2}\left\| \phi \left( {\mathbf I}_{g}^{i} \right) - \phi \left( {\mathbf I}_{gt}^{i} \right) \right\|_{1},
\vspace{-0.4mm}
\end{equation}
where $\phi$ represents the $conv3\_3$ feature extractor of the VGG19 model \cite{simonyan2014very}.

\emph{Warping loss $\mathcal{L}_w$}: Besides supervising the GS image predictions, we also introduce the warping loss $\mathcal{L}_w$ to maintain the qualities of final bidirectional optical flows, defined as:
\begin{equation}\label{eq:27}
\small
\mathcal{L}_{w} = \left\| {\mathbf I}_{r}^{1} - g( {\mathbf I}_{r}^{2}, \hat{\mathbf F}_{1 \to 2} ) \right\|_{1} + \left\| {\mathbf I}_{r}^{2} - g( {\mathbf I}_{r}^{1}, \hat{\mathbf F}_{2 \to 1} ) \right\|_{1},
\end{equation}
where $\hat{\mathbf F}_{1 \to 2}={\mathbf F}_{1 \to 2}+\Delta\mathbf{F}_{1 \to 2}$, $\hat{\mathbf F}_{2 \to 1}={\mathbf F}_{2 \to 1}+\Delta\mathbf{F}_{2 \to 1}$, and $g(\cdot,\cdot)$ is the backward warping function.

\emph{Smoothness loss $\mathcal{L}_s$}: At last, a smoothness term \cite{liu2017video} is employed to enforce the smoothness of the bidirectional optical flows and bidirectional RS undistortion flows as:
\begin{equation}\label{eq:28}
\mathcal{L}_{s} = \sum_{i=1}^{2}\sum_{j=1,j \not = i}^{2} \left\|\nabla \hat{\mathbf F}_{i \to j}\right\|_{2} + \left\|\nabla {\mathbf U}_{i \to m}\right\|_{2}.
\end{equation}

\section{Experiments} \label{sec:experiments}
\subsection{Implementation Details} \label{sec:Details}
\noindent\textbf{Dataset.}
We train and evaluate our method on the Carla-RS and Fastec-RS datasets \cite{liu2020deep} that provide ground truth middle-scanline and first-scanline GS supervisory signals.
The Carla-RS dataset is generated from a virtual 3D environment based on the Carla simulator \cite{dosovitskiy2017carla}, involving general six degrees of freedom motions.
The Fastec-RS dataset contains real-world RS images synthesized by a professional high-speed GS camera.
The image is normalized to $640\times448$ resolution to fit the basic input requirements of the backbone network PWC-Net.
Following \cite{liu2020deep}, the Carla-RS dataset is divided into a training set of 210 sequences and a test set of 40 sequences, and the Fastec-RS dataset has 56 sequences for training and the remaining 20 ones for testing.
There are no overlapping scenes between the training and test data.
We train our vanilla RSSR network on both two benchmarks to predict the middle-scanline GS images (\emph{i.e.}, $s=h/2$). At the test time, our model is able to be extended to generate arbitrary GS video frames for any scanline $s\in[0,h-1]$.

\vspace{0.5mm}
\noindent\textbf{Training strategy.}\label{Training_strategy}
Our pipeline\footnote{Code can be available at \url{https://github.com/GitCVfb/RSSR}.} is implemented in PyTorch. We use Adam \cite{Kingma_Adam_ICLR_2015} as the optimizer. The learning rate is initially set to $10^{-4}$ and decreases by a factor of 0.8 every 50 epochs. The optical flow estimation network $\mathcal{F}$ is first fine-tuned for 100 epochs from the pre-trained PWC-Net model \cite{sun2018pwc}, and then the entire model is jointly trained for another 200 epochs. Acceleration-Net is subsequently trained for 30 epochs with a fixed learning rate of $10^{-4}$. The training batch size is set to 6. We use a uniform random crop at a horizontal resolution of 256 pixels for data augmentation. Note that we do not change the longitudinal resolution to warrant the scanline dependence of RS images.

\subsection{Evaluation Protocols}
We compute the average Peak Signal-to-Noise Ratio (PSNR), Structural Similarity Index (SSIM), and Learned Perceptual Image Patch Similarity (LPIPS) \cite{zhang2018unreasonable} between estimated and GT GS images.
Higher PSNR/SSIM or lower LPIPS scores indicate better performance.
Note that \emph{unless otherwise stated, all competing methods refer to the GS image corresponding to the first scanline of the second RS frame} for consistent comparisons. Moreover, \emph{the constant velocity propagation method is preferred, \emph{i.e.}, we mainly evaluate our vanilla RSSR method} based on the constant velocity model due to its simplicity.
Subsection~\ref{Analysis_DeepUnrollNet} provides additional instructions on DeepUnrollNet \cite{liu2020deep}.
Since the Carla-RS dataset provides the GT occlusion masks, for better evaluation, we conduct quantitative experiments including: the Carla-RS dataset with occlusion mask (\emph{CRM}), the Carla-RS dataset without occlusion mask (\emph{CR}), and the Fastec-RS dataset (\emph{FR}).

\begin{figure}[!t]
	\centering
	\includegraphics[width=0.48\textwidth]{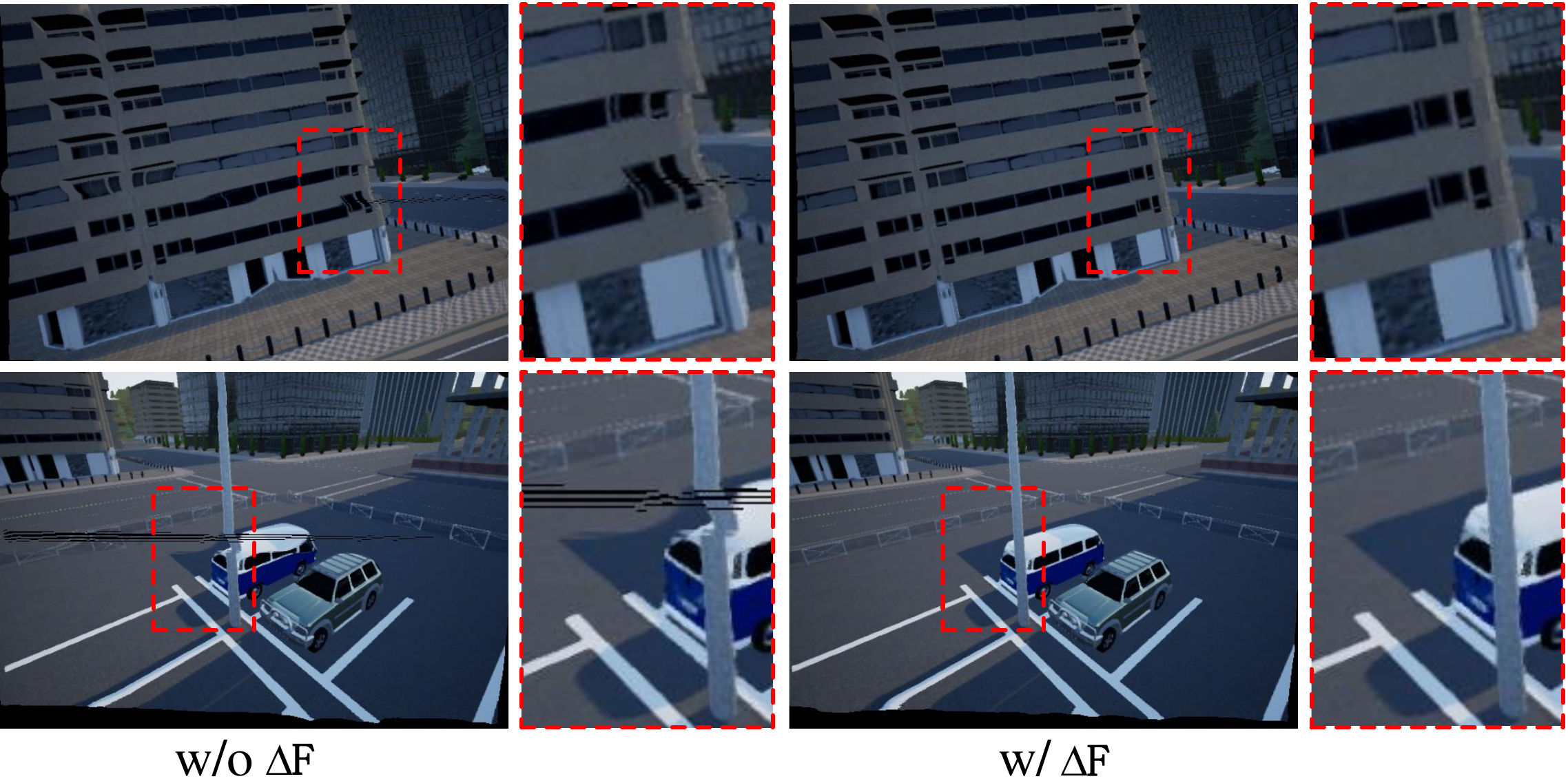}\vspace{-2.0mm}
	\caption{Effectiveness of optical flow residual estimation layer. It can effectively alleviate the artifacts and holes at the boundaries caused by optical flow misalignment, leading to higher quality results. \label{fig:exp_Carla_wo_delta_F}
	}
	\vspace{-1.0mm}
\end{figure}

\begin{table}[!t]
	\footnotesize
	\caption{Ablation study on the selection and training strategy of network $\mathcal{F}$. We employ different optical flow estimation baselines (RAFT \cite{teed2020raft} and PWC-Net \cite{sun2018pwc}), while testing the effect of freezing their parameters during training.}\label{ablation_F}\vspace{-1.5mm}
	\centering
	\begin{tabular}{cccccccc}
		\hline
		\multirow{2}{*}{RAFT} &\multirow{2}{*}{PWC-Net} &\multirow{2}{*}{Freeze} &\multicolumn{2}{c}{PSNR$\uparrow$ (dB)}  & &\multicolumn{2}{c}{SSIM$\uparrow$}      \\ \cline{4-5} \cline{7-8}
		&&&CRM             & FR            & & CR            & FR            \\ \hline
		$\checkmark$ &              & $\checkmark$  & 29.81          & 20.63  &  & \textbf{0.87}  & 0.77  \\
		$\checkmark$ &              & $\times$      & 26.75          & 20.38  &  & 0.84      & 0.71  \\
		& $\checkmark$ & $\checkmark$ & 29.44       & 20.64       &  & 0.86      & 0.77  \\
		& $\checkmark$ & $\times$   & \textbf{30.17} & \textbf{21.26}&  & \textbf{0.87} & \textbf{0.78}  \\ \hline
	\end{tabular}
\end{table}

\subsection{Ablation Studies}
To demonstrate the effectiveness of each component in our proposed network, we evaluate the controlled comparisons over network $\mathcal{F}$, network $\mathcal{U}$, and loss function $\mathcal{L}$, respectively.
We train these variations using the same strategy as aforementioned in Subsection~\ref{Training_strategy}.
We will explain each of them in detail below.

\vspace{0.5mm}
\subsubsection{Ablation on the selection and training strategy of network $\mathcal{F}$}
We first replace PWC-Net \cite{sun2018pwc} with the state-of-the-art (SOTA) optical flow estimation baseline RAFT \cite{teed2020raft}.
Then, we analyze the influence of different training strategies of network $\mathcal{F}$, including parameter freezing and model initialization during training, \emph{i.e.},
\begin{itemize}
	\item \emph{$\mathcal{F}$-Scra}: We initialize $\mathcal{F}$ from scratch and optimize it with the whole model.
	\item \emph{$\mathcal{F}$-Pret}: We initialize $\mathcal{F}$ from the pre-trained model provided by \cite{sun2018pwc} and optimize it with the whole model.
\end{itemize}

The quantitative results are summarized in Table~\ref{ablation_F} and Table~\ref{ablation_U_T}.
We can observe that RAFT contributes slightly worse than PWC-Net to the high framerate GS video extraction in our implementation.
Freezing the network parameters when using RAFT can significantly improve performance, but it has the opposite benefit when combined with PWC-Net.
When initializing from the pre-trained model, especially with the addition of fine-tuning, our method shows a substantial performance improvement. After jointly optimizing the whole network together with PWC-Net, the overall performance is further improved, which is better capable of exploiting the concealed motion between scanlines as well as the scene structure.

\vspace{0.5mm}
\subsubsection{Ablation on the design of network $\mathcal{U}$}
We further investigate the contribution of each component in network $\mathcal{U}$ as follows:
\begin{itemize}
	\item w/o \emph{$\mathbf{T}$}: We remove the normalized scanline offset in Eq.~\eqref{eq:22},
	and replace the \emph{Sigmoid} function with the \emph{Tanh} function in network $\mathcal{U}$ to uniformly map the correlation factor prediction of each pixel to the interval of $(-1,1)$.
	\item w/o P\ref{p2}: We remove the underlying constraint defined in Proposition \ref{p2}, \emph{i.e.}, use the normalized scanline offset and remove the \emph{Sigmoid} function, such that the network $\mathcal{U}$ output the correlation maps $\mathbf{C}_{1 \to m}, \mathbf{C}_{2 \to m} \in \mathbb{R}^{h\times w}$, \emph{i.e.}, learn a generalized middle-scanline undistortion flow.
	\item w/o \emph{$\Delta\mathbf{F}$}: We remove the optical flow residual estimation layer in network $\mathcal{U}$, \emph{i.e.}, $\Delta\mathbf{F}_{1 \to 2}=\Delta\mathbf{F}_{2 \to 1}=\mathbf{0}$ in Eq.~\eqref{eq:23}.
\end{itemize}

We report the results in Table~\ref{ablation_U_T} and Fig.~\ref{fig:exp_Carla_wo_delta_F}. One can see that the explicit constraint of the normalized scanline offset benefits the learning of the scanline-dependent nature of the RS undistortion flow, which is consistent with the observation in \cite{liu2020deep}.
Moreover, imposing the intrinsic geometric constraint in Proposition \ref{p2} promotes better convergence of our vanilla network and decisively improves \emph{CRM} by 4.22 dB.
And adding the optical flow residual estimation layer is effective to facilitate the edge alignment and improve the robustness of the proposed model in the extreme case, thereby recovering more complete and accurate image details.

\vspace{0.5mm}
\subsubsection{Ablation on the loss function}
We show the results of training our models under different loss function settings in Table~\ref{ablation_U_T}. We remove each loss term from the overall loss function $\mathcal{L}$ respectively. Without $\mathcal{L}_w$ indicates freezing the parameters of PWC-Net in Table~\ref{ablation_F}. Forcing the smoothness of the estimated flows has a particularly positive effect on improving the performance.
Our loss function is effective as the performance of adopting all loss terms is the best.

\begin{table}[!t]
	\footnotesize
	\caption{Effectiveness of different components of our vanilla RSSR method on the Carla-RS dataset.}\label{ablation_U_T}
	\vspace{-1.5mm}
	\centering
	\begin{tabular}{lcccccc}
		\hline
		\multirow{2}{*}{} & \multicolumn{2}{c}{PSNR$\uparrow$ (dB)} &  & SSIM$\uparrow$    &  & LPIPS$\downarrow$ \\ \cline{2-3} \cline{5-5} \cline{7-7}
		& CRM        & CR        &  & CR &  & CR \\ \hline
		\emph{$\mathcal{F}$-Scra} & 27.37       & 24.22       &  & 0.80   &  & 0.0804 \\
		\emph{$\mathcal{F}$-Pret} & 29.89       & 24.61       &  & 0.86   &  & 0.0697 \\ \hline\hline
		w/o $\mathbf{T}$        & 25.43       & 22.55       &  & 0.82   &  & 0.1116 \\
		w/o {P}\ref{p2}         & 25.95       & 22.98       &  & 0.78   &  & 0.0812 \\
		w/o $\Delta \mathbf{F}$ & 29.12       & 24.29       &  & 0.85   &  & 0.0725 \\ \hline\hline
		w/o $\mathcal{L}_r$ & 29.44       & 24.35       &  & 0.86   &  & 0.0713 \\
		w/o $\mathcal{L}_p$ & 29.82       & 24.61       &  & 0.86   &  & 0.0706 \\
		w/o $\mathcal{L}_s$ & 29.28       & 24.44       &  & 0.86   &  & 0.0725 \\ \hline\hline
		full model          & \textbf{30.17} & \textbf{24.78} &  & \textbf{0.87} &  & \textbf{0.0695} \\ \hline
	\end{tabular}
\end{table}

\begin{figure*}[!t]
	\centering
	\includegraphics[width=0.98\textwidth]{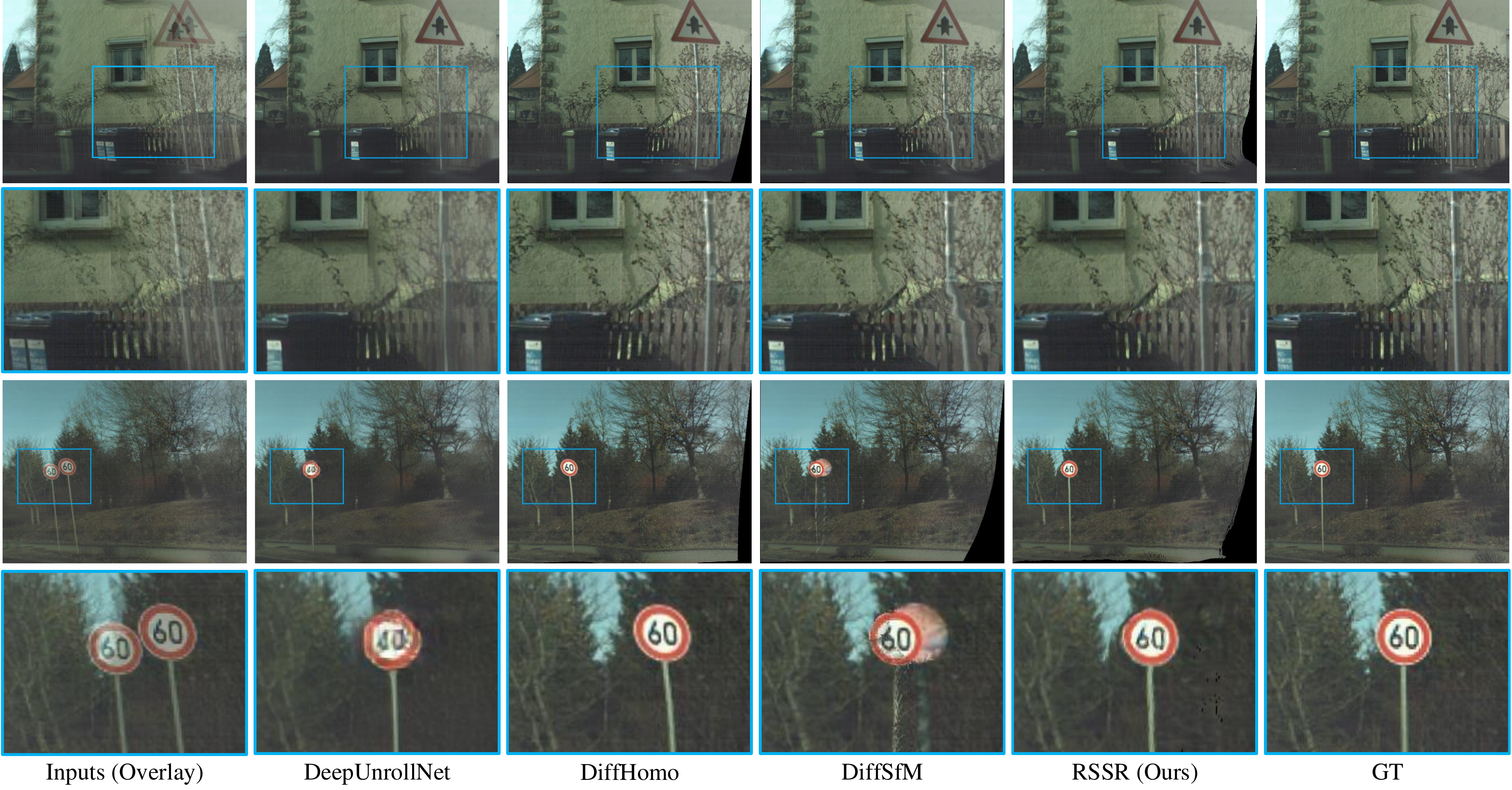}\vspace{-4.0mm}
	\caption{Visual comparisons on the Fastec-RS testing set. We zoom in the correction results according to the blue boxes. Our method fully explores the underlying RS geometry and generates a set of high-quality GS results, in spite of the road sign that is subject to large RS effects.\label{fig:exp_Fastec_zoom}}
	\vspace{-1.0mm}
\end{figure*}

\begin{figure*}[!t]
	\centering
	\includegraphics[width=0.998\textwidth]{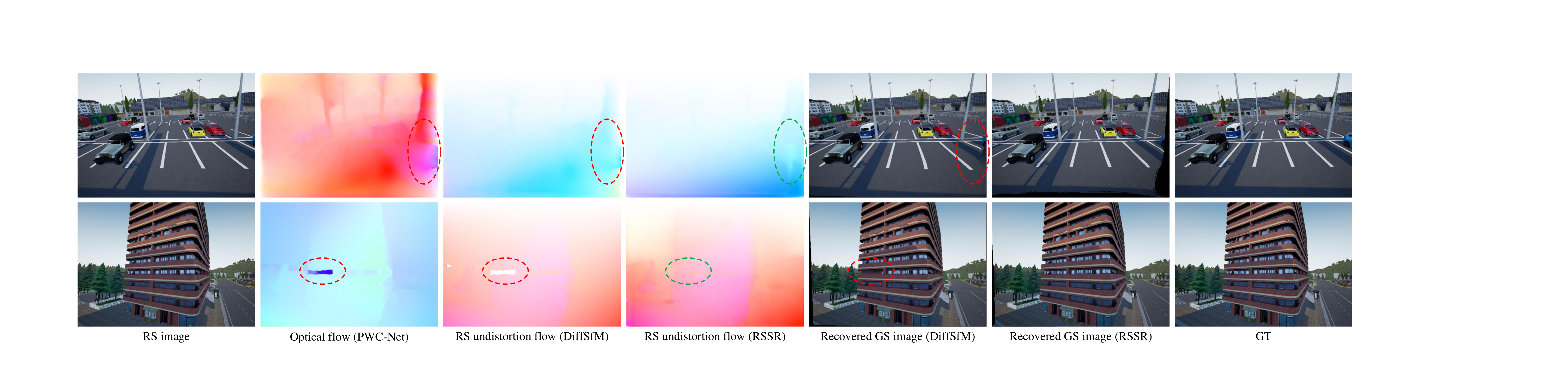}\vspace{-3.2mm}
	\caption{Visual examples of RS undistortion flow obtained by DiffSfM and our method, respectively. Thanks to end-to-end learning, our method can generate more accurate RS undistortion flows (see green circles), and thus synthesize higher quality GS images. \label{fig:VS_Diffsfm_Flows}}
	\vspace{-1.3mm}
\end{figure*}

\begin{table*}[!t]
	\footnotesize
	\caption{Quantitative comparisons on recovering GS images corresponding to the first scanline of the second RS frame. {\color{red}\textbf{Bold}} and {\color{blue}\underline{underlined}} numbers represent the best and second-best performance. Our approach performs favorably against existing methods.}\label{comprehensive_comparison}
	\vspace{-3.0mm}
	\centering
	\setlength{\tabcolsep}{3mm}{
		\begin{tabular}{lccccccccc}
			\hline
			\multirow{2}{*}{Method} &\multicolumn{3}{c}{PSNR$\uparrow$ (dB)}      &   & \multicolumn{2}{c}{SSIM$\uparrow$} &   & \multicolumn{2}{c}{LPIPS$\downarrow$}     \\ \cline{2-4} \cline{6-7} \cline{9-10}
			& CRM            & CR             & FR            & & CR            & FR   & & CR            & FR         \\ \hline
			DeepUnrollNet \cite{liu2020deep}        & \color{blue}\underline{26.90}    & \color{red}\textbf{26.46}     & \color{red}\textbf{26.52}   & & \color{blue}\underline{0.81}  & \color{red}\textbf{0.79}    & & \color{blue}\underline{0.0703}     & \color{red}\textbf{0.1222}      \\
			DiffSfM-\emph{\scriptsize PWCNet} \cite{zhuang2017rolling}  & 19.53    & 18.62     & 18.59   & & 0.69     & 0.63   & & 0.2042     & 0.2416       \\
			DiffSfM-\emph{\scriptsize RAFT} \cite{zhuang2017rolling} & 24.20    & 21.28     & 20.14   & & 0.78     & 0.70  & & 0.1322     & 0.1789        \\
			RSSR (Ours) & \color{red}\textbf{30.17} & \color{blue}\underline{24.78} & \color{blue}\underline{21.26} & & \color{red}\textbf{0.87} & \color{blue}\underline{0.78} & & \color{red}\textbf{0.0695}     & \color{blue}\underline{0.1424}\\ \hline
	\end{tabular}}
	\vspace{-1.0mm}
\end{table*}

\subsection{Comparisons with SOTA RS Correction Methods}\label{Comparison_RSC}
We evaluate the proposed RSSR method against the following RS correction algorithms:
\begin{itemize}
	\item[-] \textbf{DeepUnrollNet} \cite{liu2020deep}: This is the SOTA learning-based two-image RS correction method. It has no ability to produce high framerate GS video sequences. Note that the pre-trained model released by the authors is used to estimate the GS image of the middle scanline of the second RS frame (See Subsection~\ref{Analysis_DeepUnrollNet} for more details). For a fair comparison, we retrain it to synthesize the GS image corresponding to the first scanline of the second RS frame.
	\item[-] \textbf{DiffSfM} \cite{zhuang2017rolling} and \textbf{DiffHomo} \cite{zhuang2020homography}: They are traditional RS correction methods. We implement two versions of DiffSfM using optical flow inputs obtained by PWC-Net and RAFT, respectively.
	Since DiffHomo is not open source, we asked the authors to test some examples. Note that these two methods are suitable for recovering the GS image at the first scanline of the first RS frame. To ensure a fair comparison, for the input of three consecutive RS images, we run our method using the first two RS frames as input, and perform DiffSfM and DiffHomo using the last two RS images as input. In this way, we can generate and align to the GS image corresponding to the first scanline of the second RS frame.
\end{itemize}

We report the quantitative results in Table~\ref{comprehensive_comparison}. The better optical flow obtained by RAFT can improve the estimation accuracy of RS geometry, thus promoting the performance of RS correction in DiffSfM, which also indicates the importance of mining the underlying RS geometry.
Our model performs favorably against all the compared methods in the Carla-RS dataset and competes on par with DeepUnrollNet in the Fastec-RS dataset, since the Carla-RS dataset is more consistent with the constant motion assumption. Note that, compared with DeepUnrollNet, the black holes in our corrected GS images will reduce the PSNR score of our method without using masks. Very importantly, our method can generate high framerate and visually pleasing GS video sequences.
We provide the visual correction results with noticeable RS distortions in Fig.~\ref{fig:exp_Fastec_zoom}, where DiffHomo and DiffSfM fail to correct the geometric RS distortions and DeepUnrollNet causes loss of local image details.
Compared with the off-the-shelf RS correction algorithms, our pipeline effectively restores higher-quality GS images.

Furthermore, considering the existence of an analytic solution for the correction map ${\mathbf C}$ in Eq.~\eqref{eq:16} (which can be estimated by DiffSfM \cite{zhuang2017rolling}), we visualize and compare the intermediate process.
Since RS undistortion flow is essentially the core of GS image recovery, we illustrate it estimated by our method and DiffSfM in Fig.~\ref{fig:VS_Diffsfm_Flows}. In addition to the complex computation, DiffSfM is powerless to correct the initial incorrect optical flow and thus produces local artifacts, as shown by the red circles. Furthermore, when DiffSfM estimates erroneous RS geometry, such as biased camera motion, the RS effect will not be truly removed. In contrast, since our approach supports end-to-end learning, the inaccuracy of the initial optical flow can be automatically adjusted to yield a more accurate task-oriented RS undistortion flow, thereby recovering a more reliable GS image.

\begin{figure*}[!t]
	\centering
	\includegraphics[width=0.88\textwidth]{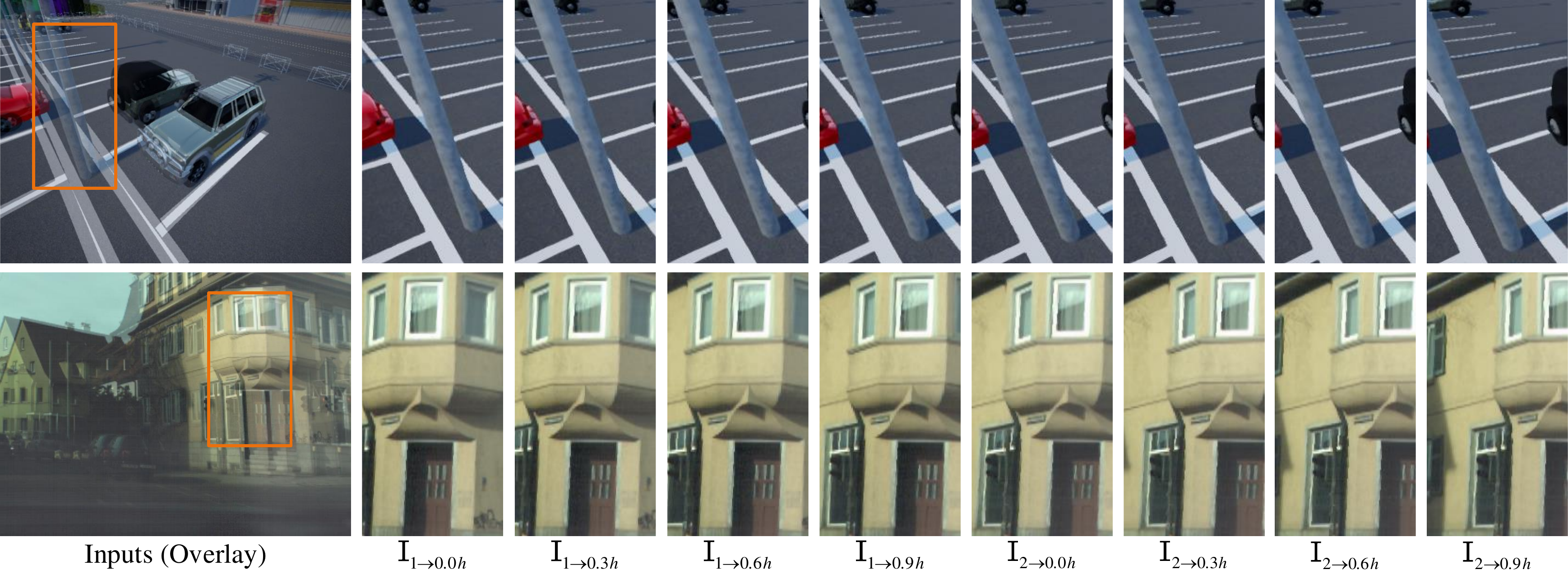}\vspace{-3.7mm}
	\caption{Examples of GS video frame interpolation by our vanilla RSSR method. We zoom in the intermediate results according to the orange boxes. For example, ``${\mathbf I}_{1 \to 0.3h}$'' denotes the corrected global-shutter image corresponding to $0.3h$-th scanline of RS frame 1. Our method not only preserves the temporal smoothness but also corrects the rolling-shutter artifacts. \label{fig:multiple_frames}}
	\vspace{-1.0mm}
\end{figure*}

\begin{figure*}[!t]
	\centering
	\includegraphics[width=1.0\textwidth]{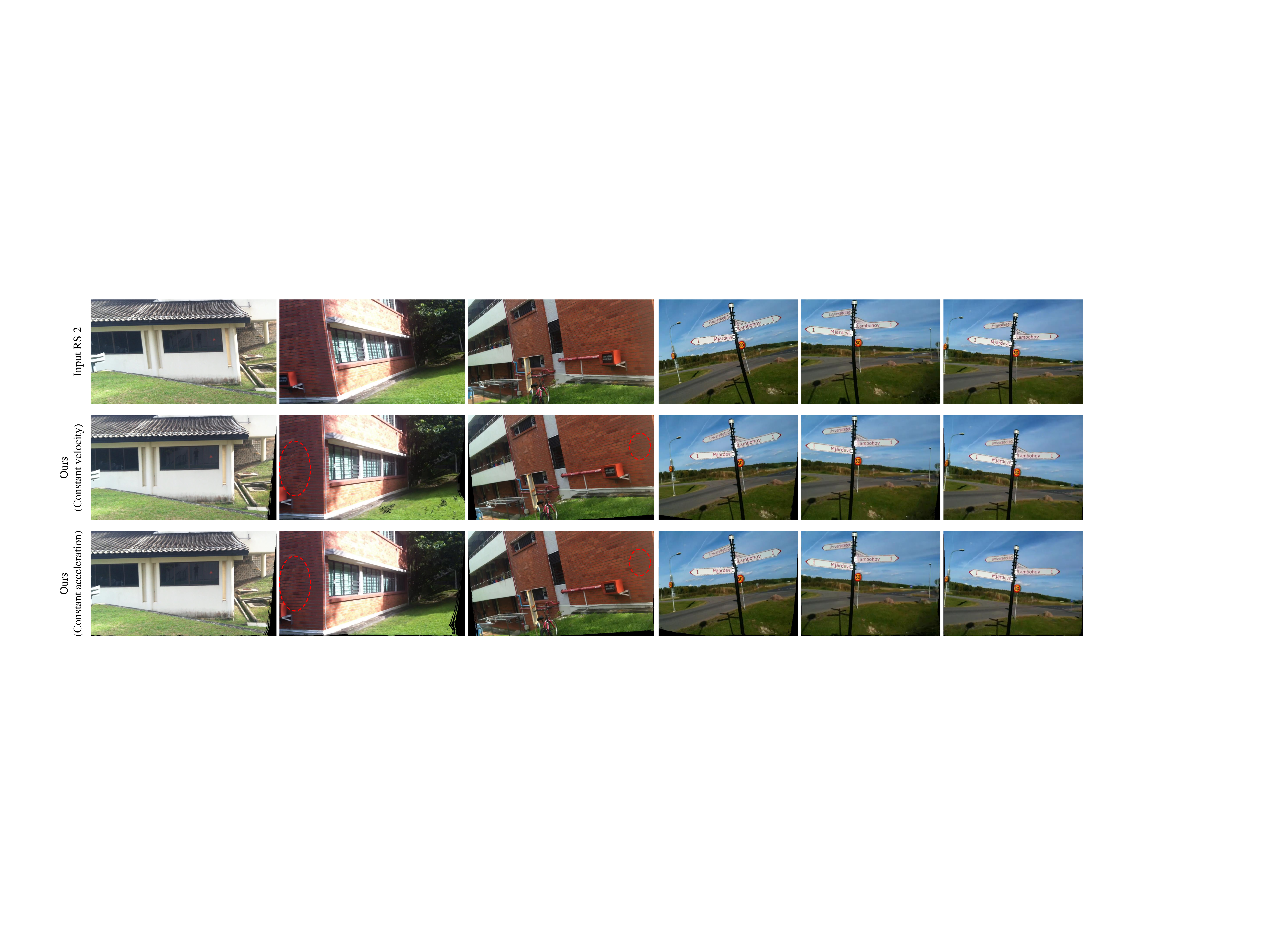}\vspace{-3.5mm}
	\caption{Qualitative image correction results on real data with obvious RS distortion provided by \cite{zhuang2017rolling} and \cite{forssen2010rectifying}, respectively. The left three columns are from \cite{zhuang2017rolling} and the right three columns are from \cite{forssen2010rectifying}. Our pipeline can effectively remove RS distortion as a whole. In particular, our more general constant acceleration model produces more trustworthy images on unseen data, as indicated by the red circles. \label{fig:generalization_vel_acc}
	\vspace{-1.5mm}
	}
\end{figure*}


\begin{figure}[!t]
	\centering
	\includegraphics[width=0.478\textwidth]{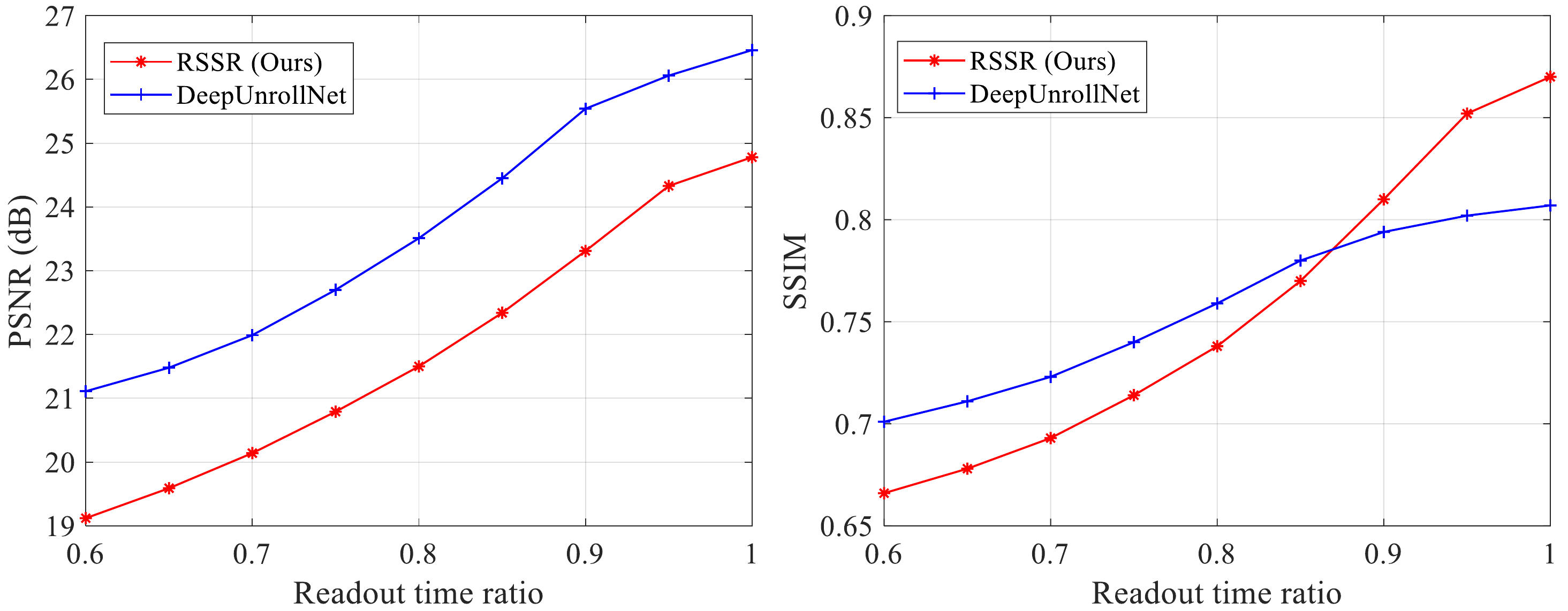}\vspace{-3.0mm}
	\caption{Robustness evaluation of the readout time ratio. (PSNR \emph{vs.} Readout time ratio and SSIM \emph{vs.} Readout time ratio)\label{fig:varying_rtr}
	}
	\vspace{-3.0mm}
\end{figure}

\begin{table*}[!t]
	\footnotesize
	\caption{Quantitative comparisons on recovering GS images corresponding to the first scanline of the second RS frame under the constant velocity propagation and the constant acceleration propagation, respectively.} \label{comparison_RSSR_plus_plus}
	\vspace{-3.0mm}
	\centering
	\setlength{\tabcolsep}{3mm}{
		\begin{tabular}{lccccccccc}
			\hline
			\multirow{2}{*}{Our Model} &\multicolumn{3}{c}{PSNR$\uparrow$ (dB)}      &   & \multicolumn{2}{c}{SSIM$\uparrow$} &   & \multicolumn{2}{c}{LPIPS$\downarrow$}     \\ \cline{2-4} \cline{6-7} \cline{9-10}
			& CRM            & CR             & FR            & & CR            & FR   & & CR            & FR         \\ \hline
			RSSR (\emph{\scriptsize Constant velocity}) & {30.17} & {24.78} & {21.26} & & {0.867} & {0.784} & & {0.0695}     & {0.1424} \\
			RSSR (\emph{\scriptsize Constant acceleration}) & \textbf{30.35} & \textbf{24.83} & \textbf{21.43} & & \textbf{0.870} & \textbf{0.790} & & \textbf{0.0688}     & \textbf{0.1379}       \\ \hline
	\end{tabular}}
	\vspace{-1.5mm}
\end{table*}

\subsection{Generating Multiple GS Video Frames}
We generate multiple GS video frames corresponding to different scanlines, as shown in Fig.~\ref{fig:multiple_frames}.
Moreover, we attach a \emph{supplementary video} to dynamically show the reversed GS video clips, involving a cropping operation.
In principle, we can produce videos with arbitrary frame rates. Our RSSR network learns to solve the complex RS geometry embedded in the consecutive RS frames, so it can robustly and accurately recover photorealistic time-continuous GS images.
Overall, our method not only has the advantage of RS correction at a specific scanline time, but also has the superior ability to restore GS images at any scanline time.

\begin{figure*}[!t]
	\centering
	\setlength{\tabcolsep}{0.025cm}
	\setlength{\itemwidth}{3.68cm}
	\hspace*{-\tabcolsep}\begin{tabular}{cccc}
		{\small Input RS 2} & {\footnotesize Ours (Constant velocity)} & {\small Ours (Constant acceleration)} & {\small Ground truth} \\
		\includegraphics[width=\itemwidth]{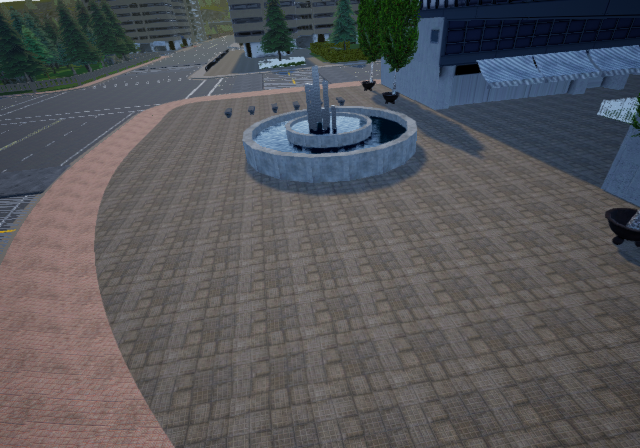}
		&
		\includegraphics[width=\itemwidth]{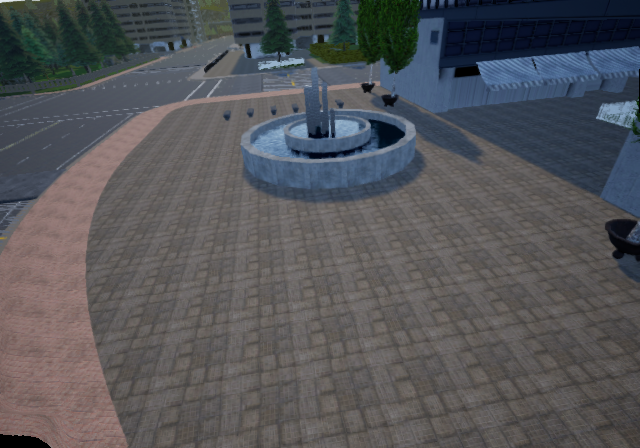}
		&
		\includegraphics[width=\itemwidth]{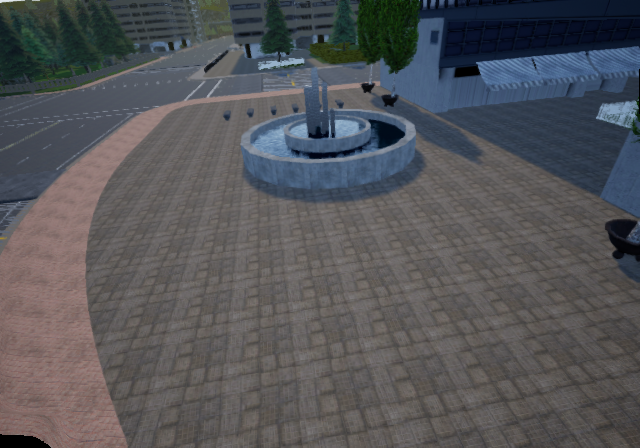}
		&
		\includegraphics[width=\itemwidth]{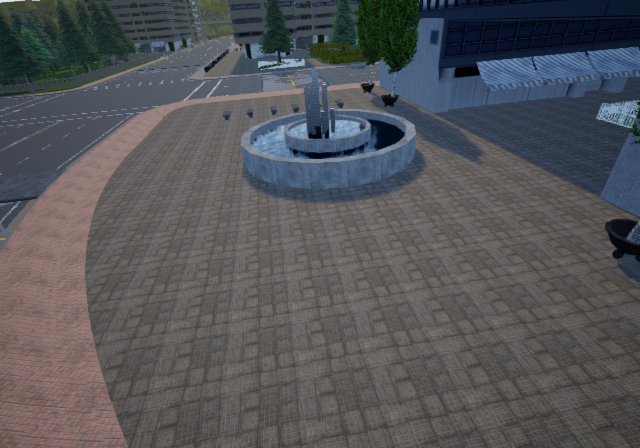}
		\vspace{-0.5mm} \\
		\footnotesize (23.95/0.480/0.0927) & \footnotesize (30.47/0.884/0.0453) & \footnotesize \textbf{(30.55/0.891/0.0449)} & \footnotesize (inf./1.000/0.0000)
		\\
		\includegraphics[width=\itemwidth]{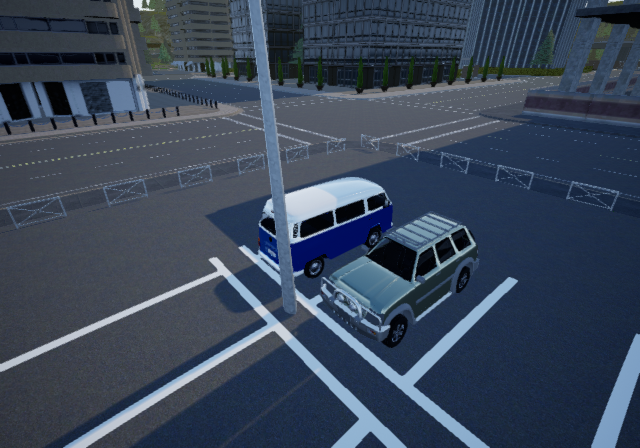}
		&
		\includegraphics[width=\itemwidth]{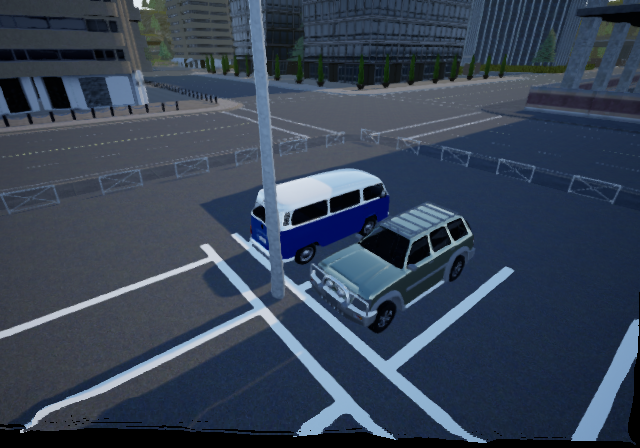}
		&
		\includegraphics[width=\itemwidth]{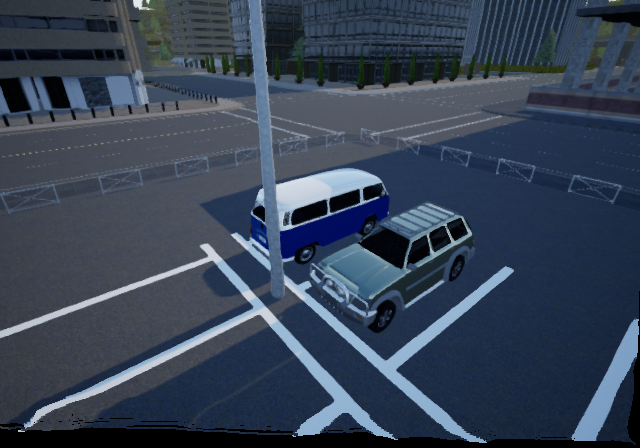}
		&
		\includegraphics[width=\itemwidth]{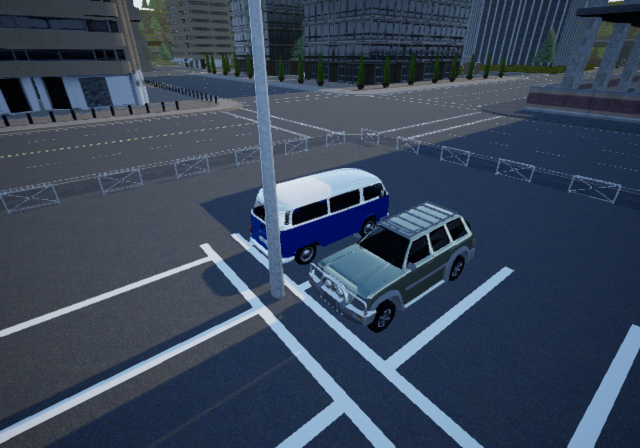}
		\vspace{-0.5mm} \\
		\footnotesize (15.50/0.487/0.2317) & \footnotesize (22.59/0.815/0.0823) & \footnotesize \textbf{(22.87/0.833/0.0817)} & \footnotesize (inf./1.000/0.0000)
		\\
		\includegraphics[width=\itemwidth]{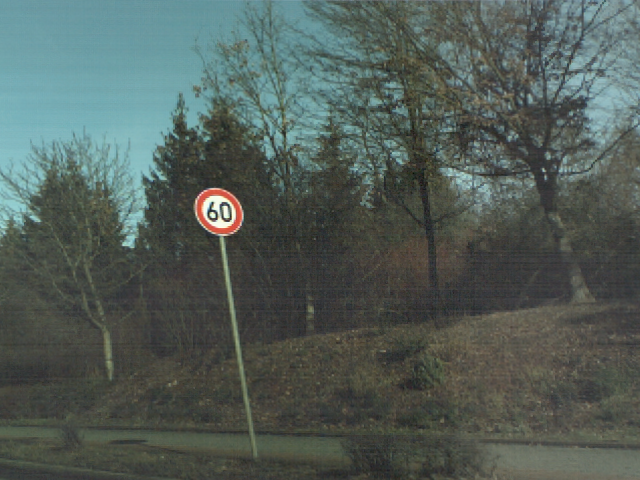}
		&
		\includegraphics[width=\itemwidth]{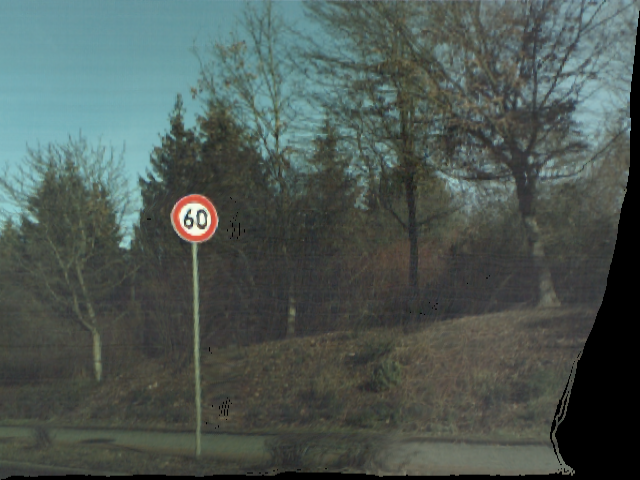}
		&
		\includegraphics[width=\itemwidth]{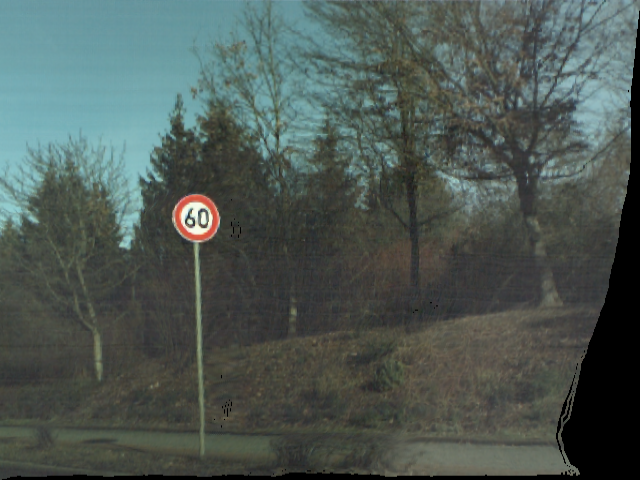}
		&
		\includegraphics[width=\itemwidth]{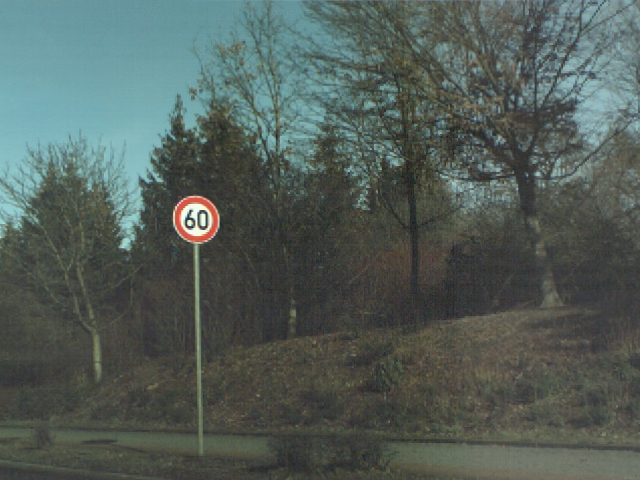}
		\vspace{-0.5mm} \\
		\footnotesize (21.55/0.515/0.1991) & \footnotesize (20.78/0.664/0.1983) & \footnotesize \textbf{(21.12/0.683/0.1913)} & \footnotesize (inf./1.000/0.0000)
		\\
		\includegraphics[width=\itemwidth]{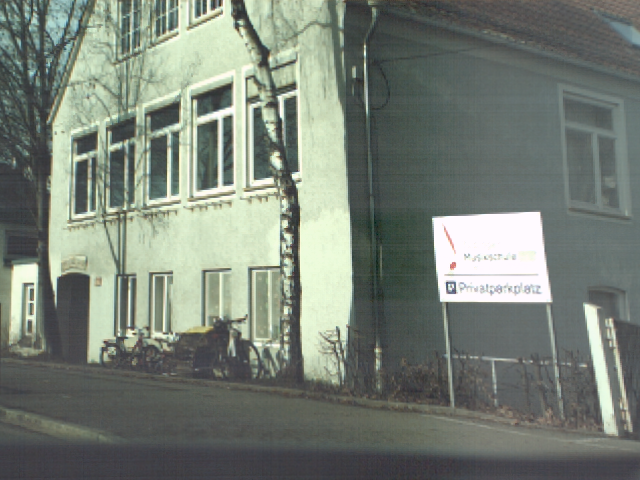}
		&
		\includegraphics[width=\itemwidth]{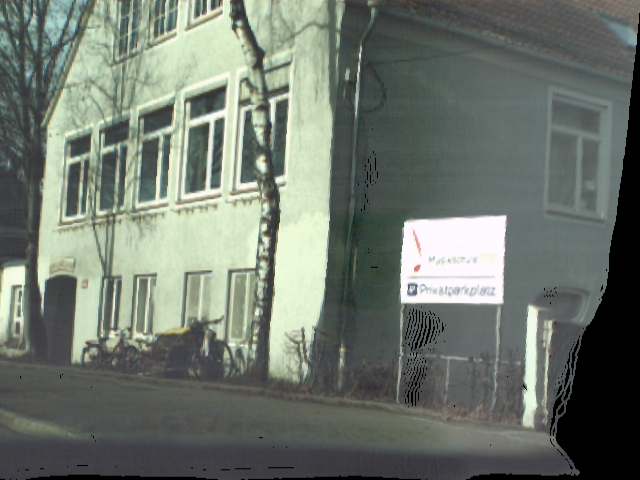}
		&
		\includegraphics[width=\itemwidth]{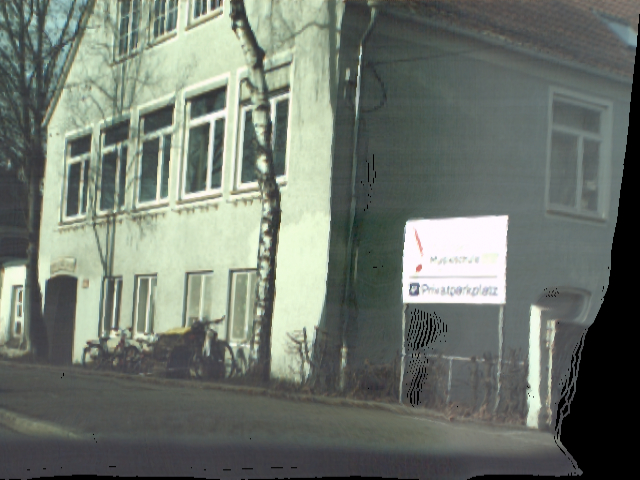}
		&
		\includegraphics[width=\itemwidth]{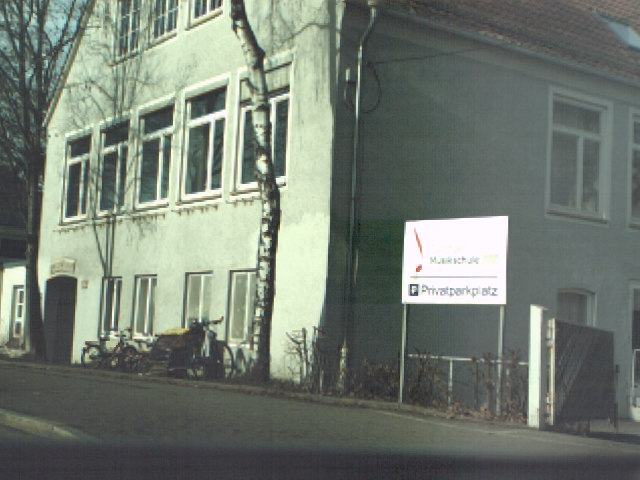}
		\vspace{-0.5mm} \\
		\footnotesize (13.93/0.516/0.2545) & \footnotesize (19.34/0.749/0.1977) & \footnotesize \textbf{(20.16/0.787/0.1796)} & \footnotesize (inf./1.000/0.0000) 	
	\end{tabular}\vspace{-0.2cm}
	\caption{Metrics performance (\emph{i.e.}, PSNR$\uparrow$/SSIM$\uparrow$/LPIPS$\downarrow$) on estimating the global-shutter image of the first scanline under the constant velocity model and the constant acceleration model, respectively. The overall quality of the recovered global-shutter image is further upgraded by applying a more general constant acceleration propagation.}
	\label{fig:vel_vs_acc}
	\vspace{-1.0mm}
\end{figure*}

\begin{figure}[!t]
	\centering
	\begin{minipage}[t]{0.494\linewidth}
		\centering
		\includegraphics[width=1\linewidth]{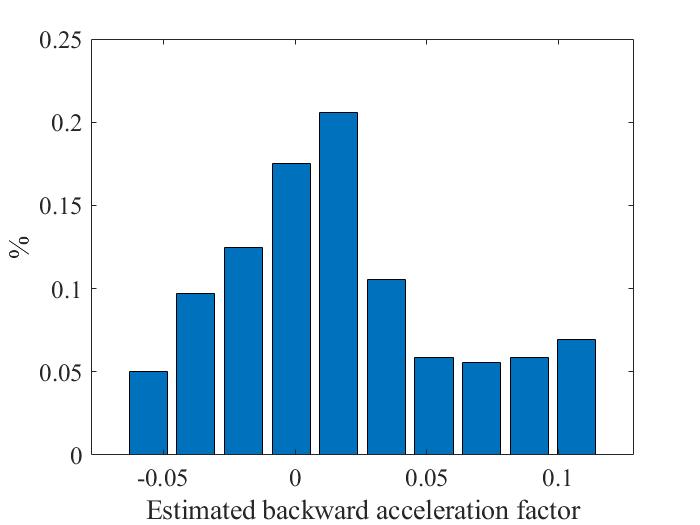}\\
		{\footnotesize (a) Carla-RS dataset}
	\end{minipage}
	\begin{minipage}[t]{0.494\linewidth}
		\centering
		\includegraphics[width=1\linewidth]{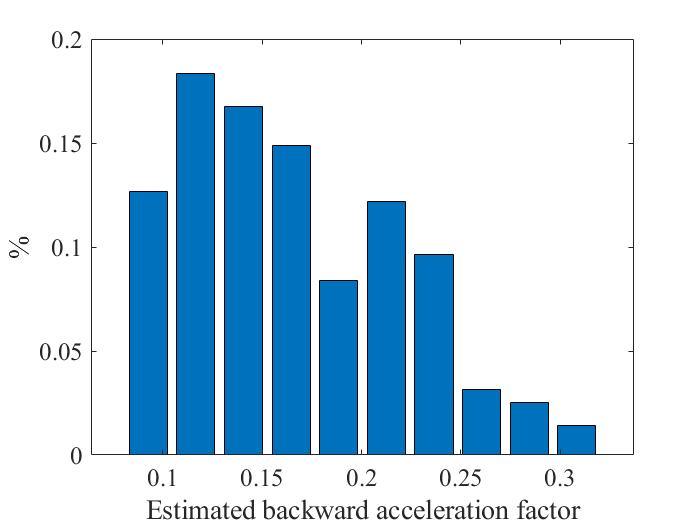}\\
		{\footnotesize (b) Fastec-RS dataset}
	\end{minipage}
	\caption{Frequency distribution histograms of the estimated backward acceleration factors by our Acceleration-Net, where the Carla-RS dataset exhibits much smaller acceleration factor values compared to the subtle intra-frame motion, which is in line with the expectation. \label{fig:backward_acceleration_plot}}
	\vspace{-1.0mm}
\end{figure}

\subsection{Generalization and Robustness Evaluation}
Our learning-based model is trained on the Carla-RS dataset, in which the RS artifacts are mainly caused by uniform camera motion. We apply our method to real RS images provided by \cite{forssen2010rectifying} and \cite{zhuang2017rolling}. The example results are shown in Fig.~\ref{fig:generalization_vel_acc}.
The attached \emph{video} provides more results. Note that the real data was recorded in \cite{forssen2010rectifying} by a rotation-dominated hand-held RS camera and \cite{zhuang2017rolling} collected 720p RS images at 30 fps.
The results reveal that our method owns good generalization ability and can recover visually compelling GS images, due to the learned RS geometry.

Since the readout time ratio $\gamma$ is set to 1 in the Carla-RS and Fastec-RS datasets, the models obtained based on them may be biased towards such training data. Therefore, to verify the robustness to $\gamma$, we use the Carla-RS dataset to simulate continuous RS image pairs with $\gamma$ ranging from 0.6 to 1. For example, to obtain the test data with $\gamma=0.8$, we first discard the pixels in the last $20\%$ scanlines of each RS image and then bilinearly interpolate to the original resolution. We report the evaluation results in Fig.~\ref{fig:varying_rtr}. It can be seen that the RS correction performance gradually decreases as $\gamma$ deviates from the training dataset of $\gamma=1$, due to the fact that our network $\mathcal{U}$ essentially encapsulates the underlying RS geometry with $\gamma=1$
(\emph{i.e.}, the correction map defined in Eq.~\eqref{eq:16}). Actually, $\gamma$ is crucial for two-view RS geometry estimation and needs to be calibrated in advance, which has also been verified in the case of depth estimation \cite{im2018accurate} and RS correction \cite{zhuang2017rolling,zhuang2020homography,vasu2018occlusion}. Our method allows for a slight perturbation of $\gamma$ and thus generalizes well to the real RS data provided by \cite{zhuang2017rolling} and \cite{forssen2010rectifying}, where $\gamma$ is equal to 0.96 and 0.92, respectively. Note that the model exported by DeepUnrollNet is also quite dependent on the training dataset. Thus, developing a method that is robust to readout time ratios will be a future research direction.

\subsection{An Alternative: Constant Acceleration Propagation}
Although our vanilla RSSR method has achieved impressive results in generating high framerate GS video frames under the constant velocity model as mentioned before, an alternative propagation scheme based on Acceleration-Net and Eq.~\eqref{eq:18} under the constant acceleration model achieves tangible improvements over the constant velocity propagation method, as shown in Table~\ref{comparison_RSSR_plus_plus} and Figs.~\ref{fig:vel_vs_acc} and~\ref{fig:generalization_vel_acc}.
In terms of how the RS correction dataset itself is generated in \cite{liu2020deep}, the Carla-RS dataset follows the constant velocity motion model more closely, on which the improvement of performance metrics is therefore not significant. This is also verified in Fig.~\ref{fig:backward_acceleration_plot}. The introduction of the constant acceleration model on the Fastec-RS dataset to extend to the first scanline (or any scanline) is more successful in compensating for the RS effect, as would be expected given more realistic motion assumptions.
The higher quality first-scanline GS image recovery is illustrated in Figs.~\ref{fig:vel_vs_acc} and~\ref{fig:generalization_vel_acc}, which demonstrates that the constant acceleration propagation effectively improves the performance and makes the inversion results more faithful to the latent global-shutter images.

\begin{figure*}[!t]
	\centering
	\includegraphics[width=0.98\textwidth]{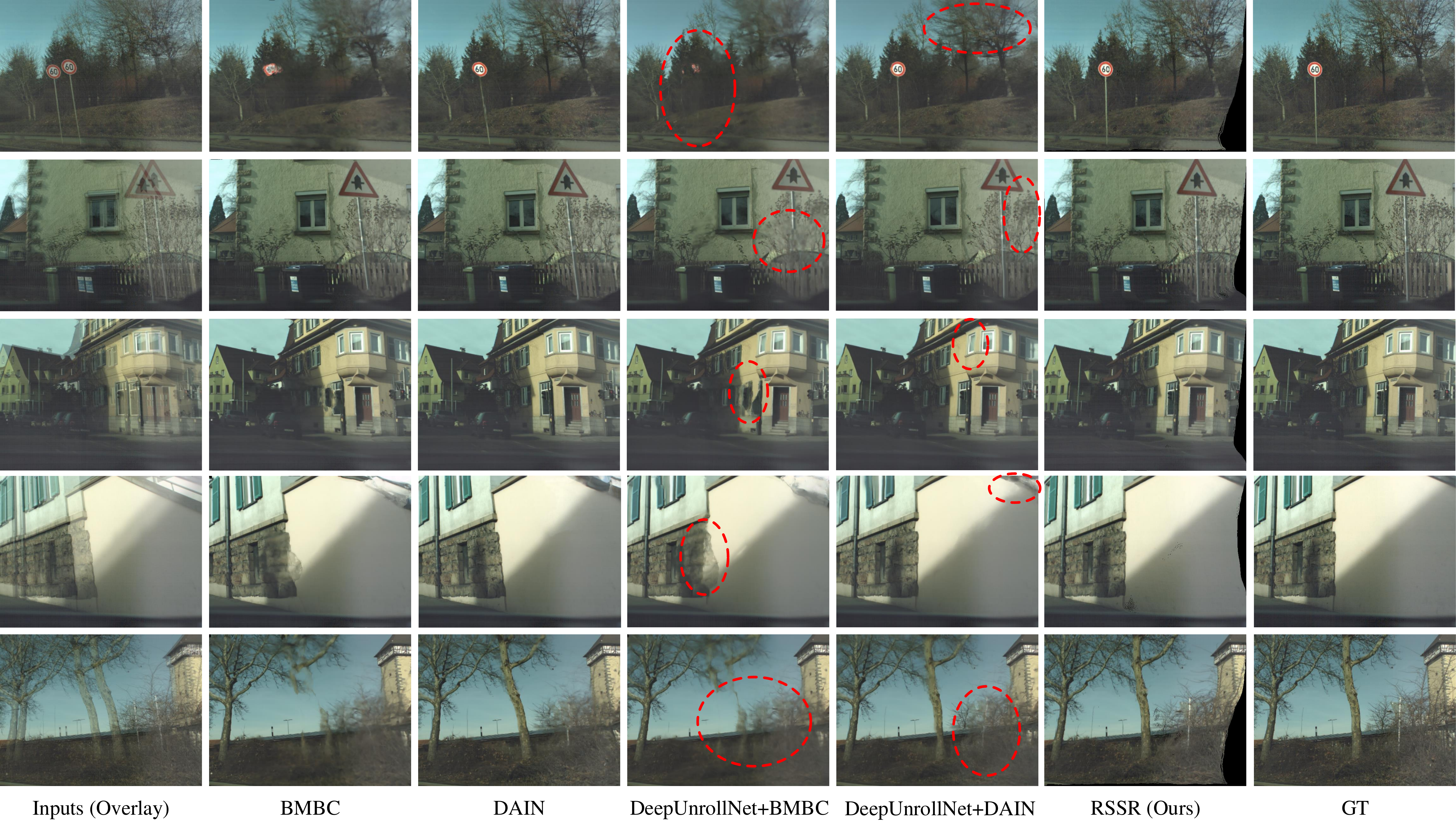}\vspace{-3.0mm}
	\caption{Comparison with VFI methods (BMBC \cite{park2020bmbc} and DAIN \cite{bao2019depth}) and two-stage methods (DeepUnrollNet + BMBC and DeepUnrollNet + DAIN). \label{fig:VS_VFI_etc}
	}
	\vspace{-1.0mm}
\end{figure*}

\begin{table}[!t]
	\footnotesize
	\caption{Superiority of our method over varying two-stage methods on the Carla-RS dataset. We report the quantitative results corresponding to the first and middle scanlines of the RS frame, respectively.}\label{vs_two_stage}
	\centering
	\begin{tabular}{lccc}
		\hline
		{Method} & {PSNR$\uparrow$ (dB)} & SSIM$\uparrow$    & LPIPS$\downarrow$ \\ \cline{2-2} \cline{3-3} \cline{4-4}\hline\hline
		\multicolumn{4}{c}{Corresponding to the first scanline} \\ \hline
		DeepUnrollNet + BMBC  & 27.29       & 0.83     & 0.0980 \\
		DeepUnrollNet + DAIN  & 27.48       & 0.87     & 0.0821 \\
		RSSR (Ours)           & \textbf{30.17} & \textbf{0.87} & \textbf{0.0695} \\ \hline\hline
		\multicolumn{4}{c}{Corresponding to the middle scanline} \\ \hline
		SUNet + BMBC        & 28.51         & 0.85   & 0.1033 \\
		SUNet + DAIN        & 28.63         & 0.85   & 0.0919 \\
		RSSR (Ours)         & \textbf{29.36} & \textbf{0.90} & \textbf{0.0553} \\ \hline
	\end{tabular}
	\vspace{-1.0mm}
\end{table}

\subsection{Versus SOTA Video Frame Interpolation Methods}
The current video frame interpolation algorithms, \emph{e.g.}, BMBC \cite{park2020bmbc} and DAIN \cite{bao2019depth}, have a common implicit assumption that the camera employs a global-shutter mechanism,
where the pixel displacement is controllable and located in the corresponding optical flow. Specifically, linearly scaling the optical flow between 0 and 1 to approximate the required intermediate pixel displacement in order to warp input images.
In contrast, to correct the RS image, as shown in Eq.~\eqref{eq:15}, the pixel displacement is neither a linear function of scanline time (including complex RS geometry) nor within the corresponding optical flow (\emph{i.e.}, the length of the RS undistortion flow may be larger than that of the optical flow, or its direction may be opposite to the optical flow), involving intrinsic non-local operations.
Therefore, because of inherent flaws in the network architectures, the existing video frame interpolation algorithms are incapable of eliminating the RS effect effectively.
We validate this argument in Fig.~\ref{fig:VS_VFI_etc}, which also highlights the superiority of our method in recovering high-quality distortion-free GS video frames.

Furthermore, a naive approach to the RS inversion task at hand would be to cascade RS correction and interpolation methods.
We thus conduct experiments to compare with this so-called two-stage approach.
Since DeepUnrollNet \cite{liu2020deep} and SUNet \cite{fan2021sunet} are fundamentally designed to recover the GS image corresponding to the middle and first scanlines of the second RS frame, respectively, we implement four two-stage methods including four cascades as follows:

\begin{itemize}
	\item[-] \textbf{DeepUnrollNet + BMBC} and \textbf{DeepUnrollNet + DAIN}: Given three consecutive RS images, we first obtain two middle-scanline GS images in sequence using DeepUnrollNet, and then interpolate the GS image corresponding to the first scanline of the third RS image using BMBC and DAIN, respectively.
	\item[-] \textbf{SUNet + BMBC} and \textbf{SUNet + DAIN}: Given three consecutive RS images, we first obtain two first-scanline GS images in sequence using SUNet, and then interpolate the GS image corresponding to the middle scanline of the second RS image using BMBC and DAIN, respectively. For consistent comparison, we do not visualize them.
\end{itemize}

As outlined in Fig.~\ref{fig:VS_VFI_etc}, the two-stage approach is prone to error accumulation and therefore suffers from serious blurring artifacts and local inaccuracies. Predictably, further recursive interpolation of other time steps will exacerbate these image degradations. The quantitative results on the Carla-RS dataset are shown in Table~\ref{vs_two_stage}.
Moreover, it is computationally inefficient.
For instance, to recover 960 GS video frames, the two-stage approach takes about 5 minutes (at least 0.3 seconds to generate a frame with DAIN), while our vanilla RSSR method takes a total of 1.8 seconds.
Since our method simultaneously solves the problems of RS correction and temporal super-resolution in an end-to-end manner, it achieves remarkable performance in terms of both quality and efficiency.



\begin{figure*}[!t]
	\centering
	\includegraphics[width=1.0\textwidth]{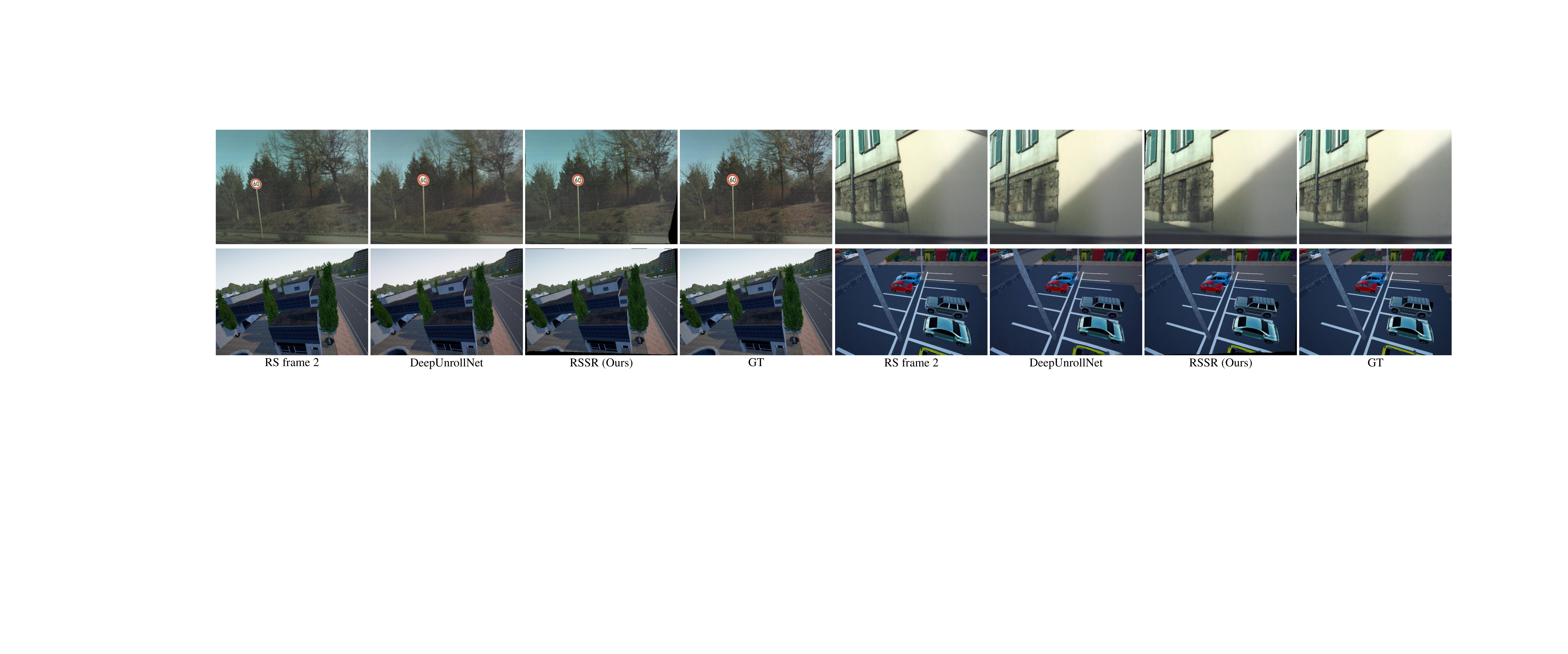}
	\vspace{-8.0mm}
	\caption{Visual comparison of GS images corrected to the middle scanline of the second RS frame. At this time, the corresponding plausible global-shutter image can be reconstructed by DeepUnrollNet \cite{liu2020deep} and our method. Since we have not developed a modulated image decoder as in DeepUnrollNet, our method cannot yet fill the occluded regions. Our RSSR method, however, is able to recover global-shutter video images at any scanline, which is far beyond the reach of DeepUnrollNet. \label{fig:analysy_DeepUnrollNet}}
	\vspace{-4.0mm}
\end{figure*}

\begin{figure}[!t]
	\centering
	\includegraphics[width=0.485\textwidth]{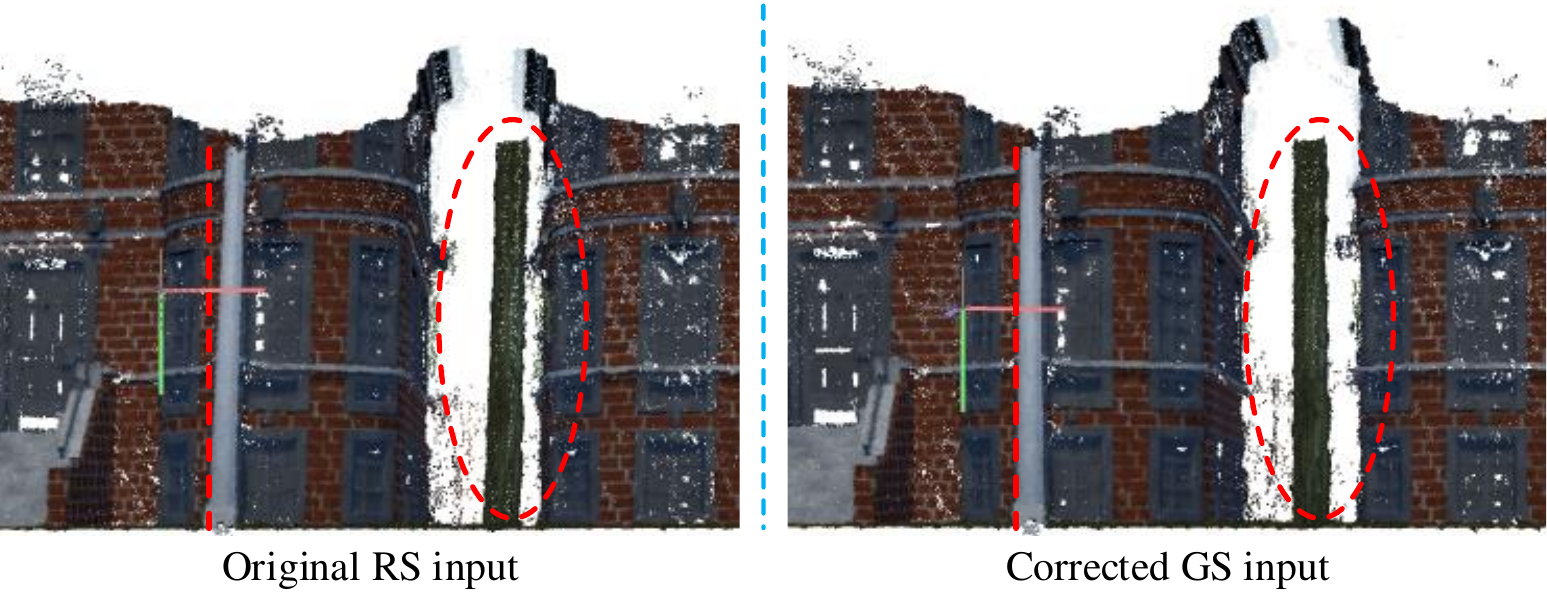}\vspace{-3.0mm}
	\caption{3D reconstruction results using the original RS images and our corrected GS images, respectively. Our method removes RS distortion and thus facilitates the recovery of more accurate 3D geometry. \label{fig:3D_Construction}
	}
	\vspace{-1.0mm}
\end{figure}

\subsection{3D Reconstruction Evaluation}
We run an SfM pipeline (\emph{i.e.}, Colmap \cite{schonberger2016structure}) to process the original RS images and our corrected GS images, respectively. Fig.~\ref{fig:3D_Construction} demonstrates that our method can remove the geometric inaccuracies and reconstruct a more reliable 3D scene structure. Note that significant geometric distortion appears in the 3D model obtained from the original RS images.
Overall, our method is beneficial to downstream applications with its superior ability to remove RS artifacts.



\subsection{Additional Comparison with DeepUnrollNet}\label{Analysis_DeepUnrollNet}
In our evaluation, in order to be consistent with \cite{zhuang2019learning,zhuang2020homography,zhuang2017rolling} (\emph{i.e.}, recovering the GS image corresponding to the first scanline), all competing results relate to the first scanline of the second frame. We thus retrain DeepUnrollNet \cite{liu2020deep} to adapt to this task for fair comparison. However, the original publication of \cite{liu2020deep} is designed to restore a GS image corresponding to the middle scanline of the second frame, so we additionally add quantitative and qualitative results under the middle scanline of the second frame, as shown in Table~\ref{t_s_1} and Fig.~\ref{fig:analysy_DeepUnrollNet}. One can see that, except for some occluded black edges (\emph{e.g.}, CRM score), our method is comparable or superior to DeepUnrollNet in returning the GS image corresponding to the middle scanline of the second frame, which verifies the validity of Proposition~\ref{p2}.
Moreover, combined with the experiments in Table~\ref{ablation_U_T} for learning a generalized middle-scanline undistortion flow (\emph{i.e.}, the ablation ``w/o P\ref{p2}''), this is why we follow the middle-scanline undistortion flow generated by scaling the corresponding optical flow vector under the constant velocity model for our subsequent constant acceleration propagation, \emph{i.e.}, the initial middle-scanline undistortion flow is able to yield a reliable and accurate prediction of the warping displacement direction.

Note that our method can restore the GS image corresponding to any scanline \emph{without demanding access to the supervision of the corresponding GT GS images}.
However, DeepUnrollNet and JCD \cite{zhong2021rscd} are limited to hallucinate a single middle-scanline GS frame.
Although DeepUnrollNet can also be trained to learn the mapping between the RS image and the first-scanline GS image when the costly first-scanline GT is provided, recovering the GS image corresponding to the first scanline of the second frame is not impressive.
Since acquiring the GT GS images is expensive and laborious, especially for arbitrary scanlines, it poses significant challenges for network design and model training.
Our method successfully circumvents this problem and is satisfactory in restoring GS images of the first scanlines, middle scanlines, and even arbitrary scanlines.

\begin{table}[!t]
	\footnotesize
	\caption{Quantitative comparisons of recovering GS images corresponding to the middle scanline of the second RS frame.
}\label{t_s_1}
	\centering
	\vspace{-2.5mm}
	\begin{tabular}{lccccccccc}
		\hline
		\multirow{2}{*}{Method} & \multicolumn{3}{c}{PSNR$\uparrow$ (dB)}                      &   & \multicolumn{2}{c}{SSIM$\uparrow$}  \\ \cline{2-4} \cline{6-7}
		& CRM     & CR        & FR      & & CR        & FR       \\ \hline
		JCD$^\dag$ \cite{zhong2021rscd} & - & - & 24.84 & & - & 0.78 \\ \hline
		DeepUnrollNet \cite{liu2020deep}  & 27.86 & \textbf{27.54} & \textbf{27.02} & & 0.83 & \textbf{0.83}   \\
		RSSR (Ours)                       & \textbf{29.36} & 26.57 & 25.01 & & \textbf{0.90} & \textbf{0.83} \\ \hline
	\end{tabular}
	\vspace{-0.5mm}
	\begin{tablenotes}
		\raggedleft
		\item{
			\small{$\dag$: \emph{results copied from \cite{zhong2021rscd}}}
		}
	\end{tablenotes}
	\vspace{-4.0mm}
\end{table}

\subsection{Inference Times}
Our method can simultaneously predict two middle-scanline GS images with VGA resolution ($640 \times 480$ pixels) in near real-time on an NVIDIA GeForce RTX 2080Ti GPU (average 0.12 seconds), which is faster than the 0.34 seconds of \cite{liu2020deep} to restore a single GS image. Further, we can extend to generate a GS image corresponding to any certain scanline via propagation, using an average runtime of 1.75 milliseconds, because only simple explicit matrix operations are required.
Therefore, our vanilla RSSR method can efficiently \textbf{\emph{produce 960 GS video frames in about 1.80 seconds}}, achieving temporal upsampling from 30 fps to 14400 (480$\times$30) fps.
In other words, quickly and extremely enlarging the frame rate of reversed GS video will be easy to reach.
Optionally, the lightweight Acceleration-Net will take an additional 1.17 milliseconds under the constant acceleration motion model, which can be negligible.
DiffSfM \cite{zhuang2017rolling}, however, takes about 467.26 seconds to recover a single GS image on an Intel Core i7-7700K CPU, which is a disadvantage for time-constrained tasks, such as real-time robotic visual localization.

\subsection{Limitations}
The main limitation of our approach is that it is not robust to heavy occlusions and moving objects.
Occlusion increases the difficulty of estimating the RS geometry, which leads to black holes in the synthesized images, degrading the visual experience. This problem might be alleviated by performing context aggregation in feature space (\emph{e.g.} \cite{fan2021sunet,niklaus2018context}) or image space (\emph{e.g.} \cite{xu2019quadratic,jiang2018super}). Moreover, moving objects have always been a great challenge for model-based RS correction methods, \emph{e.g.}, \cite{lao2020rolling,zhuang2017rolling,zhuang2020homography,zhuang2019learning,rengarajan2017unrolling}. In image regions specific to moving objects, the RS undistortion flow will not be well estimated due to violation of camera motion assumptions, resulting in ambiguous motion artifacts.

\section{Conclusion} \label{sec:conclusion}
In this paper, we tackle the challenging task of RS image inversion, \emph{i.e.}, converting consecutive RS images to high framerate GS videos. To this end, we have constructed and discussed the inherent interaction between bidirectional RS undistortion flow and regular optical flow. We present the first novel and intuitive RS temporal super-resolution framework that extracts a latent GS image sequence from two consecutive RS images, which is guided by the underlying geometric properties of the problem itself. Our pipeline owns good interpretability and generalization ability due to RS geometry-aware learning.
Furthermore, we generalize and develop a constant acceleration propagation method, which contributes to further improving the accuracy and robustness of the RSSR task.
We have demonstrated that our approach not only reconstructs geometrically and temporally consistent video sequences efficiently but also removes RS artifacts. In the future, we plan to extend our method to handling continuous rolling-shutter videos directly, and making better use of the scene context information. Meanwhile, we hope that our work will help incorporate traditional computer vision knowledge into deep learning systems, so as to harness decades of valuable research and contribute to the development of a more principled framework.

\ifCLASSOPTIONcompsoc
  \section*{Acknowledgments}
  This research was supported in part by National Natural Science Foundation of China (62271410, 62001394, and 61901387) and National Key Research and Development Program of China (2018AAA0102803).
\else
  \section*{Acknowledgment}
\fi


\ifCLASSOPTIONcaptionsoff
  \newpage
\fi



%

%
%

\bibliographystyle{IEEEtran}
\bibliography{RS_References}

%

\vspace{-10mm}
\begin{IEEEbiography}[{\includegraphics[width=1in,height=1.25in,clip,keepaspectratio]{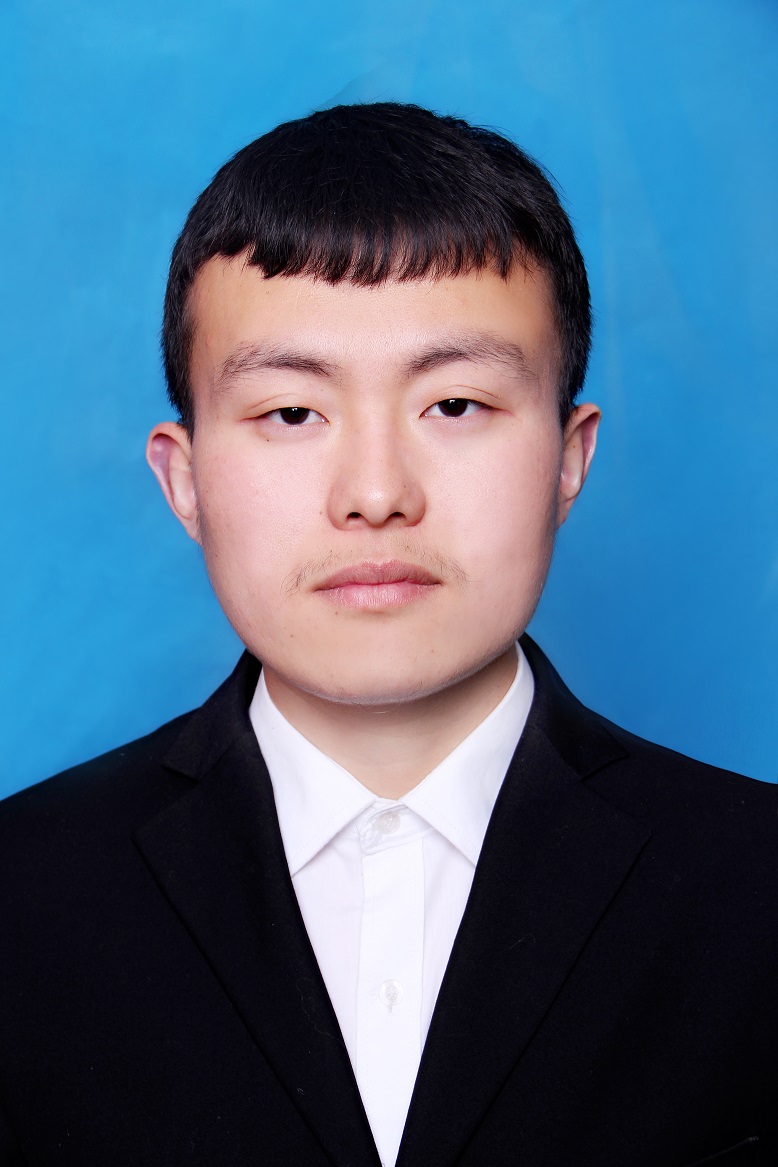}}]{Bin Fan} is currently a PhD student with School of Electronics and Information, Northwestern Polytechnical University (NPU), Xi'an, China.
He received the B.S. degree in Statistics and the M.E. degree in Control Science and Engineering from NPU, in 2016 and 2019, respectively.
He was selected to CVPR 2022 Doctoral Consortium (the only one among Chinese universities).
He co-organized the ACCV 2022 tutorial on the topic of rolling shutter cameras.
His research interests include computer vision, computational photography, 3D reconstruction, and deep learning.
\end{IEEEbiography}

\vspace{-10mm}
\begin{IEEEbiography}[{\includegraphics[width=1in,height=1.25in,clip,keepaspectratio]{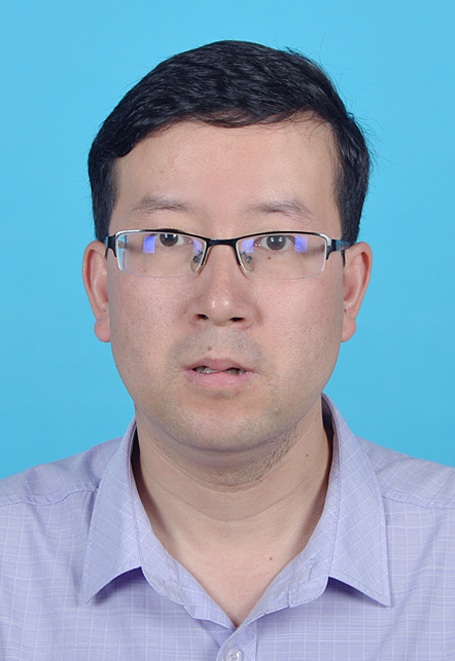}}]{Yuchao Dai} is currently a Professor with School of Electronics and Information at the Northwestern Polytechnical University (NPU). He received the B.E. degree, M.E degree and Ph.D. degree all in signal and information processing from Northwestern Polytechnical University, Xi'an, China, in 2005, 2008 and 2012, respectively. He was an ARC DECRA Fellow with the Research School of Engineering at the Australian National University, Canberra, Australia. His research interests include structure from motion, multi-view geometry, low-level computer vision, deep learning, compressive sensing and optimization. He won the Best Paper Award in IEEE CVPR 2012, the DSTO Best Fundamental Contribution to Image Processing Paper Prize at DICTA 2014, the Best Algorithm Prize in NRSFM Challenge at CVPR 2017, the Best Student Paper Prize at DICTA 2017, the Best Deep/Machine Learning Paper Prize at APSIPA ASC 2017, the Best Paper Award Nominee at IEEE CVPR 2020. He served as Area Chair in CVPR, ICCV, ACM MM, ACCV, WACV, etc. He serves as Publicity Chair in ACCV 2022.
\end{IEEEbiography}

\vspace{-10mm}
\begin{IEEEbiography}[{\includegraphics[width=1in,height=1.25in,clip,keepaspectratio]{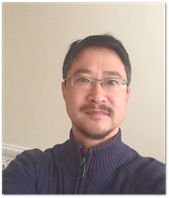}}]{Hongdong Li} is a Chief Investigator of the Australian Centre of Excellence for Robotic Vision, professor of Computer Science with the Australian National University. He joined the RSISE of ANU from 2004. He was a Visiting Professor with the Robotics Institutes, Carnegie Mellon University, doing a sabbatical in 2017-2018. His research interests include 3D computer vision, machine learning, autonomous driving, virtual and augmented reality, and mathematical optimization. He is an Associate Editor for IEEE Transactions on PAMI, and IVC, and served as Area Chair for recent CVPR, ICCV and ECCV. He was the winner of the 2012 CVPR Best Paper Award, the 2017 Marr Prize (Honorable Mention), a finalist for the CVPR 2020 best paper award, ICPR Best student paper award, ICIP Best student paper award, DSTO Fundamental Contribution to Image Processing Prize at DICTA 2014, Best algorithm award in CVPR NRSFM Challenge 2017, and a Best Practice Paper (honourable mention) award at WACV 2020. He is a Co Program Chair for ACCV 2018 and co general chair for ACCV 2022, Co-Publication Chair for IEEE ICCV 2019.
\end{IEEEbiography}

\end{document}